\colorlet{colorFst}{Green!25}       
\colorlet{colorSnd}{SpringGreen!45} 
\colorlet{colorTrd}{Yellow!30}      
\colorlet{colorLow}{darkgray!30}    
\colorlet{colorDeg}{Orange!30}      
\newcommand{\fs}{\cellcolor{colorFst}\bf}   
\newcommand{\nd}{\cellcolor{colorSnd}}      
\newcommand{\rd}{\cellcolor{colorTrd}}      
\DeclareMathOperator*{\argmin}{arg\,min}
\def\ie{\mbox{\textit{i.e.}, }}
\def\eg{\mbox{\textit{e.g.}, }}
\def\wrt{\mbox{\textit{w.r.t. }}}
\def\balpha{\mbox{{\boldmath $\alpha$}}}
\def\bmu{\mbox{{\boldmath $\mu$}}}
\def\beps{\mbox{{\boldmath $\epsilon$}}}
\def\bnu{\mbox{{\boldmath $\nu$}}}
\def\mL{{\mathcal L}}
\def\mN{{\mathcal N}}
\def\mR{{\mathcal R}}
\DeclareMathAlphabet\mathbfcal{OMS}{cmsy}{b}{n}
\def\0{{\bf 0}}
\def\1{{\bf 1}}
\def\bA{{\bm{A}}}
\def\bG{{\bm{G}}}
\def\bI{{\bm{I}}}
\def\bM{{\bm{M}}}
\def\bn{{\bm n}}
\def\bx{{\bm x}}
\def\by{{\bm y}}
\def\bz{{\bm z}}
\def\balpha{{\bm \alpha}}
\def\bmu{{\bm \mu}}
\newtheorem*{*thm}{Theorem}
\newtheorem{prop}{Proposition}
\newtheorem*{*lemma}{Lemma}
\newenvironment*{proof}{\textbf{Proof}\quad}{\hfill $\square$\par\vspace{1mm}}
\definecolor{mygray}{gray}{.9}
\newlength \g
\definecolor{cvprblue}{rgb}{0.21,0.49,0.74}
\def\mytitle{Deep Equilibrium Diffusion Restoration with Parallel Sampling}
\title{\mytitle}
\author{Jiezhang Cao$^{1}$,\enspace Yue Shi$^{1,2}$,\enspace Kai Zhang$^{3,}$\thanks{Corresponding author.}~,\enspace Yulun Zhang$^{2}$,\enspace Radu Timofte$^{1,4}$,\enspace Luc Van Gool$^{1,5}$\\
\!\!\!\!\!\!\!\textsuperscript{1}ETH Z\"{u}rich, \!\!
\textsuperscript{2}Shanghai Jiao Tong University, \!\!
\textsuperscript{3}Nanjing University, \!\!
\textsuperscript{4}University of W\"{u}rzburg, \!\!
\textsuperscript{5}KU Leuven \\
{\small \textbf{\url{https://github.com/caojiezhang/DeqIR}}}
}
\begin{document}
\maketitle
\begin{abstract}
    Diffusion model-based image restoration (IR) aims to use diffusion models to recover high-quality (HQ) images from degraded images, achieving promising performance. Due to the inherent property of diffusion models, most existing methods need long serial sampling chains to restore HQ images step-by-step, resulting in expensive sampling time and high computation costs. Moreover, such long sampling chains hinder understanding the relationship between inputs and restoration results since it is hard to compute the gradients in the whole chains. In this work, we aim to rethink the diffusion model-based IR models through a different perspective, \ie a deep equilibrium (DEQ) fixed point system, called \textbf{DeqIR}. Specifically, we derive an analytical solution by modeling the entire sampling chain in these IR models as a joint multivariate fixed point system. Based on the analytical solution, we can conduct parallel sampling and restore HQ images without training. Furthermore, we compute fast gradients via DEQ inversion and found that initialization optimization can boost image quality and control the generation direction. Extensive experiments on benchmarks demonstrate the effectiveness of our method on typical IR tasks and real-world settings. 
\end{abstract}    
\section{Introduction}
\label{sec:intro}
Image restoration (IR) aims at recovering a high-quality (HQ) image from a degraded input. 
Recently, diffusion models \cite{ho2020ddpm,song2020ddim} are attracting great attention because they can generate higher quality images than GANs \cite{dhariwal2021diffusion} and likelihood-based models \cite{kingma2021variational}.
Based on diffusion models \cite{ho2020ddpm, song2020ddim}, many IR methods \cite{kawar2022ddrm,chung2023dps,wang2022ddnm} achieve compelling performance on different tasks.
Directly using diffusion models in IR, however, suffers from some limitations.

\begin{figure}[t]
    \centering
    \includegraphics[width=1.00\linewidth]{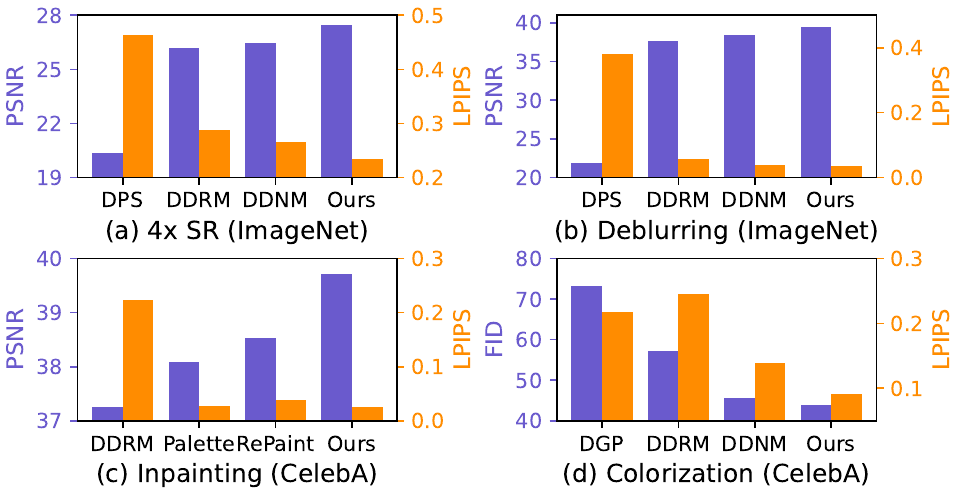}
   \vspace{-8mm}
   \caption{Comparisons of different zero-shot DMIR methods in various IR applications on different datasets.}
   \vspace{-6mm}
   \label{fig:comp_ir}
\end{figure}

First, diffusion model-based image restoration (DMIR) models rely on a long sampling chain to synthesize HQ images step-by-step, as shown in Figure \ref{fig:comp} (a).
As a result, it will lead to expensive sampling time during the inference.
For example, DPS \cite{chung2023dps} based on DDPM \cite{ho2020ddpm} needs 1k sampling steps.
To accelerate the sampling, some DMIR methods \cite{kawar2022ddrm,wang2022ddnm,zhu2023diffpir} use DDIM \cite{song2020ddim} to make a trade-off between computational cost and the restoration quality.
Based on this, these methods can reduce sampling steps to 100 or even fewer.
Unfortunately, it may degrade the sample quality when reducing the sampling steps \cite{lu2022dpm}.
It raises an interesting question: \emph{is it possible to develop an alternative sampling method without sacrificing the sample quality?}

Second, the long sampling chain makes understanding the relationship between the restoration and inputs difficult.
In practice, sampling different Gaussian noises as inputs may have diverse results for some IR tasks (\eg inpainting and colorization).
Such diversity is not necessary for some IR tasks, \eg super-resolution (SR) or deblurring.
Nevertheless, different initializations may affect the quality of SR and deblurring. 
It raises the second question: \emph{is it possible to optimize the initialization such that the generation can be improved or controlled?}
However, it is difficult for existing methods to compute the gradient along the long sampling chain as they require storing the entire computational graph.

\begin{figure*}[t]
  \centering
   \includegraphics[width=0.95\linewidth]{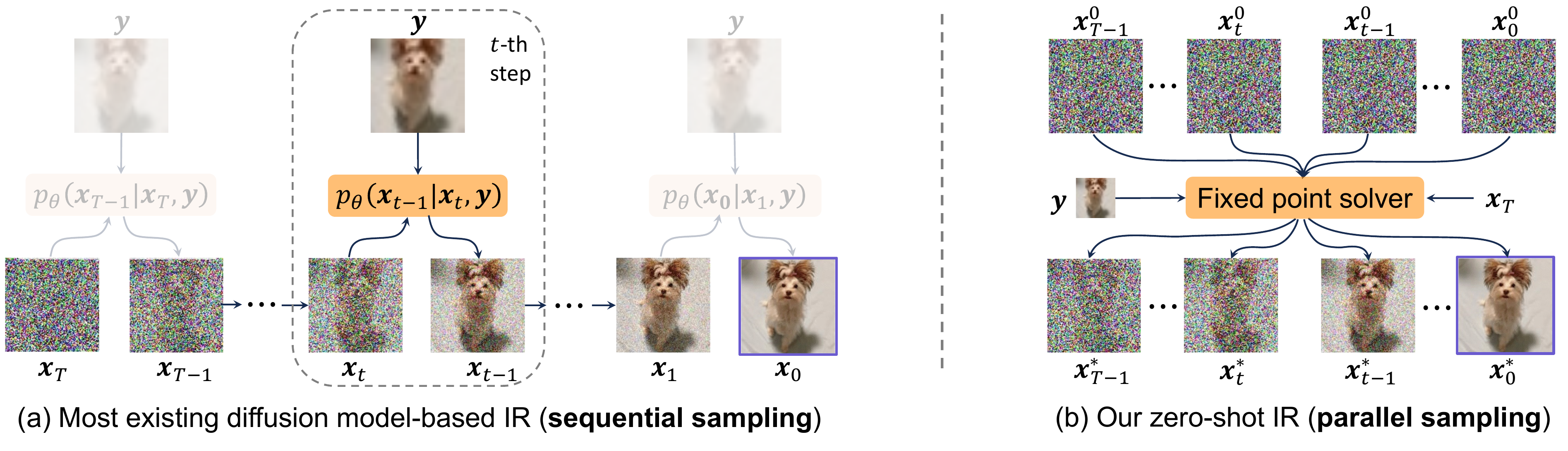}
   \vspace{-4mm}
   \caption{Comparisons of sequential sampling and our parallel sampling.}
   \vspace{-6mm}
   \label{fig:comp}
\end{figure*}

In this paper, we rethink the sampling process in IR from a deep equilibrium (DEQ) based on \cite{pokle2022deqddim}.
Specifically, we first derive a proposition to model the sampling chain as a fixed point system, achieving parallel sampling.
Then, we use a DEQ solver to find the fixed point of the sampling chain.
Last, we use modern automatic differentiation packages to compute the gradients with backpropagating and understand the relationship between input noise and restoration.

We summarize our contributions as follows:
\begin{itemize}
	\item We prove that the long sampling chain in DMIR can be formulated in a parallel way. Then we analytically formulate the generative process as a deep equilibrium fixed point system. Moreover, the generation has a convergence guarantee with few timesteps and iterations. 
    \vspace{1mm}
    \item Compared with most existing DMIR methods with sequential sampling, our method is able to achieve parallel sampling, as shown in Figure \ref{fig:comp} (b). Moreover, our method can be run on multiple GPUs instead of a single GPU.
    \vspace{1mm}
    \item Our model has more efficient gradients using DEQ inversion than existing DMIR methods which need a large computational graph for storing intermediate variables. The gradients can be computed through standard automatic differentiation packages.
    Moreover, we found that the initialization can be optimized with the gradients to improve the image quality and control the generation direction. 
    \vspace{1mm}
	\item Extensive experiments on benchmarks demonstrate the effectiveness of our zero-shot method on different IR tasks, as shown in Figure \ref{fig:comp_ir}. Moreover, our method performs well in real-world applications that may contain unknown and non-linear degradations.
\end{itemize}
\section{Related Work}
\vspace{-1.5mm}
\paragraph{Deep implicit learning (DIL).}
DIL attracts more and more attention and has emerging applications.
Different from explicit learning, DIL is based on dynamical systems, \eg optimization \cite{amos2017optnet,djolonga2017differentiable,donti2021dc3,geng2020attention,song2020score}, differential equation \cite{chen2018neural,dupont2019augmented,gu2021efficiently}, or fixed-point system \cite{bai2019deq,bai2020mdeq,gu2020implicit}.
For the fixed-point system, DEQ \cite{bai2019deq} is a new type of implicit model and it models sequential data by directly finding the fixed point and optimizing this equilibrium.
Recently, DEQ has been widely used in different tasks, \eg semantic segmentation \cite{bai2020mdeq}, object detection \cite{wang2023deep,wang2020implicit}, robustness \cite{yang2022closer,wei2021certified,li2022cerdeq}, optical flow estimation \cite{bai2022deep}, and generative models like normalizing flow \cite{lu2021implicit}.
Notably, DEQ-DDIM \cite{pokle2022deqddim} apply DEQs to diffusion models \cite{ho2020ddpm} by formulating this process as an equilibrium system.
However, applying DEQs in diffusion model-based IR methods is non-trivial because the generative process is complex, and formulating such a process is very challenging.

\vspace{1mm}
\noindent\textbf{Diffusion model-based image restoration.}
Previous image restoration (IR) methods \cite{dong2015srcnn,dong201arcnn} use convolutional neural networks (CNN) to achieve impressive performance on IR.
Up to now, many researchers propose to design the network architecture using residual blocks~\cite{VDSR,plug-denoiser,CAS-CNN}, GANs~\cite{WGAN-GP,ESRGAN,Real-ESRGAN,inpainting-GAN,menon2020pulse,cao2019mwgan,cao2018lccgan,cao2020improving,zhang2021bsrgan}, attention~\cite{RCAN,ENLCA,SAN,inpainting1,inpainting2,inpainting3,deepfillv2,IPT,SWINIR,restormer,MAT,NAFNet,liang2022vrt,cao2021vsrt,cao2023ciaosr,cao2022datsr,cao2022davsr}, and others~\cite{KDSR,restor12,restor13,restor14,restor15, AOTGAN} to improve the IR performance.

Recently, denoising diffusion probabilistic models (DDPM) \cite{ho2020ddpm} developed a powerful class of generative models that can synthesize high-quality images \cite{dhariwal2021diffusion} from noise step-by-step.
Based on the diffusion models, existing IR methods \cite{kawar2022ddrm,chung2023dps,wang2022ddnm} can be divided into supervised methods and zero-shot methods.
The supervised methods aim to train a conditional diffusion model in the image space \cite{saharia2022sr3,whang2022deblurring,saharia2022palette,yue2022difface} or the latent space \cite{rombach2022ldm,wang2023stablesr,xia2023diffir,lin2023diffbir}.
However, these methods need training diffusion models for the specific degradations and have limited generalization performance to other degradations in different IR tasks.

For zero-shot IR methods, they use a pre-trained diffusion model (\eg DDPM \cite{ho2020ddpm} and DDIM \cite{song2020ddim}) to restore images without training \cite{kawar2022ddrm,wang2022ddnm,chung2023dps}.
For example, based on a given reference image, ILVR \cite{choi2021ilvr} guides the generative process in DDPM and generates high-quality images.
Based on DDPM, DPS \cite{chung2023dps} solves the inverse problems via approximation of the posterior sampling using 1000 steps of the manifold-constrained gradient. 
Similar to DPS, DiffPIR \cite{zhu2023diffpir} integrates the traditional plug-and-play method into the diffusion models.
Repaint \cite{lugmayr2022repaint} also employs a pre-trained DDPM as the generative prior for the image inpainting task.
To accelerate the sampling, there are some IR methods using DDIM.
For example, DDRM \cite{kawar2022ddrm} applies a pre-trained denoising diffusion generative model to solve a linear inverse problem with 20 sampling steps. This method uses SVD on the degradation operator, which is similar to SNIPS \cite{kawar2021snips}. 
Based on SVD, DDNM \cite{wang2022ddnm} applies range-null space decomposition in linear image inverse problem and refines the null-space iteratively. Here, DDNM uses DDIM as the base sampling strategy with 100 sampling steps.
However, all of these methods use the serial sampling chain, resulting in a long sampling time and expensive computational cost.
\section{Preliminaries}

\paragraph{Image restoration.}
Image restoration aims at synthesizing high-quality image $\hat{\bx}$ from a degraded observation $\by = \bA \left({\bx} \right) + \bn_{\sigma} $, where $\bA$ is some degradation (\eg bicubic), $\bx$ is the original image, and $\bn_{\sigma}$ is a non-linear noise (\eg  white Gaussian noise) with the level $\sigma$.
The solution can be obtained by optimizing the following problem: 
\begin{align}
	\hat{\bx} = \argmin\nolimits_{\bx} {1}/{2\sigma^2} \left\| \bA(\bx) - \by \right\|_2^2 + \lambda \mR (\bx),
\end{align}
where $\mR (\bx)$ is a reguralization term with a trade-off parameter $\lambda$, \eg sparsity and Tikhonov regularization.

\vspace{1mm}
\noindent\textbf{Diffusion models.}
DDPM \cite{ho2020ddpm} is a generative model that can synthesize high-quality images with a forward process (\ie diffusion process) and a reverse process.
The forward process gradually introduces noise from Gaussian distribution $\mN(\cdot)$ with specific noise levels to the data, \ie
\begin{align}
    q(\bx_t | \bx_{0}) = \mN \left( \bx_t; \sqrt{\bar\alpha_t} \bx_{0}, (1-\bar\alpha_t) \bI \right),
\end{align}
where $\bar\alpha_t:=\Pi_{s=1}^t \alpha_s$, $\alpha_t:=1-\beta_t$ and $\beta_t$ is a variance.
For the reverse process, the previous state $\bx_{t-1}$ can be predicted with $\tilde{\bmu}_t$ and $\tilde{\sigma}_t$, which is formulated as:
\begin{align}
    q(\bx_{t-1} | \bx_{t}, \bx_{0}) = \mN \left( \bx_{t-1}; \tilde{\bmu}_t(\bx_t, \bx_0), \tilde{\sigma}_t^2 \bI \right),
\end{align}
where $\tilde{\bmu}_t(\bx_t, \bx_0):=\frac{\sqrt{\bar\alpha_{t-1}}\beta_t}{1-\bar\alpha_t} \bx_0 + \frac{\sqrt{\alpha_t}(1-\bar\alpha_{t-1})}{1-\bar\alpha_t} \bx_t=\frac{1}{\sqrt{{\alpha}_t} } ( \bx_t - \frac{1-\alpha_t}{\sqrt{1 - \bar{\alpha}_t}} \beps )$ 
and $\tilde{\sigma}_t^2 := \frac{1{-}\bar\alpha_{t{-}1}}{1{-}\bar\alpha_t} \beta_t$.
Here, the noise $\beps\sim \mN(0, \bI)$ can be estimated by $\beps_{\theta} (\bx_t, t)$ in each time-step.
To apply $\tilde{\bmu}_t$ to the image inverse problem, one can replace $\bx_0$ with $\hat{\bx}_{0|t}$ conditioned on the degraded image $\by$, \ie
\begin{align}
	\bx_{t-1} = \frac{\sqrt{\bar{\alpha}_{t-1}} \beta_t}{1 - \bar{\alpha}_t} \hat{\bx}_{0|t} + \frac{\sqrt{\bar{\alpha}_{t}} (1 - \bar{\alpha}_{t-1}) }{1 - \bar{\alpha}_t} \bx_t + \tilde{\sigma}_t \beps,
\end{align}
where $\hat{\bx}_{0|t}$ can be estimated by using a degradation $\bA$ to map the denoised image $\bx_{0|t} = \frac{1}{\sqrt{\bar{\alpha}_t}} ( \bx_t - \sqrt{1 -  \bar{\alpha}_t} \beps_{\theta} (\bx_t, t))$ in the degradation space \cite{wang2022ddnm}, \ie
\begin{align}
	\hat{\bx}_{0|t} = \bA^{\dag} \by + (\bI - \bA^{\dag} \bA)  \bx_{0|t},
\end{align}
where $\bA^{\dag}$ is the pseudo-inverse of $\bA$.

\vspace{1mm}
\noindent\textbf{Deep equilibrium models.}
Deep equilibrium models (DEQs) \cite{bai2019deq} are infinite depth feed-forward networks that can find fixed points in the forward pass. Given an input injection $\bx$, an hidden state $\bnu^{k+1}$ can be predicted by using an equilibrium layer $f_{\theta}$ parametrized by $\theta$, \ie
\begin{align}
	\bnu^{k+1} = f_{\theta} \left( \bnu^{k}; \bx \right), k=0, \ldots, L{-}1.
\end{align}
When increasing the depth towards infinity, the model tends to converge to a fixed point (equilibrium) $\bnu^*$, \ie
\begin{align}
	\lim\limits_{k\to \infty} f_{\theta} \left( \bnu^{k}; \bx \right) = f_{\theta} \left( \bnu^*; \bx \right) = \bnu^*.
\end{align}
To solve the equilibrium state $\bnu^*$, one can use some fixed point solvers, like Broyden's method \cite{broyden1965class}, or Anderson acceleration \cite{anderson1965iterative}, and it can be accelerated by the neural solver \cite{chen2018neural} in the inference.

\section{Methodology}
\subsection{Deep Equilibrium Diffusion Restoration}
Most existing zero-shot IR methods \cite{chung2023dps,kawar2022ddrm,wang2022ddnm} restore high-quality images step-by-step with long serial sampling chains.
Such an inherent property comes from the diffusion models, and it will lead to expensive sampling time and high computation costs.
This issue may be intractable if we need a gradient by backpropagating through the long sampling chains which often result in out-of-memory in the experiments.
To address this issue, we present a main modeling contribution in this paper.

\vspace{1mm}
\noindent\textbf{Fixed point modeling.}
Motivated by \cite{pokle2022deqddim}, our goal is to formulate diffusion model-based IR as a deep equilibrium fixed point system. 
Specifically, given a degraded image $\by$ and Gaussian noise $\bx_T$, the sampling chain $\bx_{0: T-1}$ can be treated as multivariable of the DEQ fixed point system, we first formulate $\bx_{0:T{-}1}$ as follows:
\begin{align}
	\bx_{0: T-1} = F (\bx_{0: T-1}; (\bx_T, \by)), \label{eqn:deq1}
\end{align}
where $\bx_T\sim \mN(\0, \bI)$ and $\by$ are the input injections, and $F(\cdot)$ is a function that performs sequential data across all the sample steps simultaneously.
To formulate the function $F$ in Eqn. \eqref{eqn:deq1}, we first provide the following proposition for the parallel sampling.

\begin{prop} \emph{(\textbf{Parallel sampling})} \label{prop:parallel}
	Given a degradation matrix $\bA$, a degraded image $\by$ and a Gaussian noise image $\bx_T\sim \mN(\0, \bI)$, for $k \in [1, \ldots, T]$, the state $\bx_{T-k}$ can be predicted by previous states $\{ \bx_{T-k+1}, \ldots, \bx_{T} \}$, \ie
    \begin{equation}
	\begin{aligned}
		\label{eqn:deqir-xTk}
        \bx_{T-k} =& \frac{\sqrt{\bar{\alpha}_{T-k}}}{\sqrt{\bar{\alpha}_T} } \left(\bI - \bA^{\dagger} \bA \right) \bx_T + \bA^{\dagger} \bA \bz_{T-k+1} \\
        &+ \sum_{s=T-k}^{T-1} \frac{\sqrt{\bar{\alpha}_{T-k}}}{\sqrt{\bar{\alpha}_{s}} } \left(\bI - \bA^{\dagger} \bA \right) \bz_{s+1}, 
	\end{aligned}
    \end{equation}
    where $\bz_{s}= c_s^0 \beps_{\theta} (\bx_{s}, {s}) + \sqrt{\bar{\alpha}_{s-1}} \bA^{\dagger} \by + c_s^1 \beps_s $, the coefficients are defined as $c_s^0 := c_s^2 - \sqrt{(1 - \bar{\alpha}_s)/{\alpha}_s} (\bI - \bA^{\dagger} \bA )$, $c_s^1 := \sqrt{1 - \bar{\alpha}_s} \eta$ and $c_s^2 := \sqrt{1 - \bar{\alpha}_s} \sqrt{1-\eta^2}, 0 \leq \eta < 1$.
\end{prop}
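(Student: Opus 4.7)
The plan is to unroll the single-step reverse update from the preliminaries into a closed form for $\bx_{T-k}$ by backward induction on $k$, exploiting the orthogonal decomposition of each state into its range-space and null-space components with respect to $\bA$. The crucial observation is that $\hat{\bx}_{0|t} = \bA^{\dagger}\by + (\bI - \bA^{\dagger}\bA)\bx_{0|t}$ overwrites the range part with the data-driven injection $\bA^{\dagger}\by$, so the two subspaces decouple: along the range direction the state is refreshed at each step from $\by$ and the current noise estimate, whereas along the null direction it evolves multiplicatively like an unconstrained DDIM chain.

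The first step is to derive a clean single-step recurrence. Substituting $\bx_{0|t} = (\bx_t - \sqrt{1-\bar\alpha_t}\,\beps_\theta(\bx_t,t))/\sqrt{\bar\alpha_t}$ into the DDIM-form update $\bx_{t-1} = \sqrt{\bar\alpha_{t-1}}\hat{\bx}_{0|t} + c_t^2\beps_\theta(\bx_t,t) + c_t^1 \beps_t$ and using the identity $\sqrt{\bar\alpha_{t-1}/\bar\alpha_t} = 1/\sqrt{\alpha_t}$, I will show after collecting terms that
\begin{align*}
\bx_{t-1} \;=\; \sqrt{\bar\alpha_{t-1}/\bar\alpha_t}\,(\bI - \bA^{\dagger}\bA)\bx_t \;+\; \bz_t,
\end{align*}
with $\bz_t = c_t^0\beps_\theta(\bx_t,t) + \sqrt{\bar\alpha_{t-1}}\bA^{\dagger}\by + c_t^1\beps_t$ and $c_t^0 = c_t^2 - \sqrt{(1-\bar\alpha_t)/\alpha_t}(\bI - \bA^{\dagger}\bA)$. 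The $(\bI - \bA^{\dagger}\bA)$-correction in $c_t^0$ appears precisely because $\beps_\theta(\bx_t,t)$ enters in two places---once directly with coefficient $c_t^2$ and once through $\bx_{0|t}$, whose null-space image survives the $(\bI - \bA^{\dagger}\bA)$ projection---and combining these two contributions yields the stated $c_t^0$.

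The second step is the induction itself. Assuming the claimed formula at step $T-k+1$, I apply the single-step recurrence once more. Using the projector identities $(\bI - \bA^{\dagger}\bA)^2 = \bI - \bA^{\dagger}\bA$ and $(\bI - \bA^{\dagger}\bA)(\bA^{\dagger}\bA) = \0$, the $\bA^{\dagger}\bA\bz_{T-k+2}$ term inside $\bx_{T-k+1}$ is annihilated by the projector in the recurrence---this is the range-space collapse, and it is why only the most recent $\bz_{T-k+1}$ appears in $\bA^{\dagger}\bA\bz_{T-k+1}$. The null-space contribution $(\bI - \bA^{\dagger}\bA)\bx_T$ picks up an extra factor $\sqrt{\bar\alpha_{T-k}/\bar\alpha_{T-k+1}}$ that telescopes with the previous factor into $\sqrt{\bar\alpha_{T-k}/\bar\alpha_T}$, and each summand $\sqrt{\bar\alpha_{T-k+1}/\bar\alpha_s}(\bI - \bA^{\dagger}\bA)\bz_{s+1}$ rescales to $\sqrt{\bar\alpha_{T-k}/\bar\alpha_s}(\bI - \bA^{\dagger}\bA)\bz_{s+1}$. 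Finally, the freshly added $\bz_{T-k+1}$ splits as $\bA^{\dagger}\bA\bz_{T-k+1} + (\bI - \bA^{\dagger}\bA)\bz_{T-k+1}$, supplying both the range-space term and the $s = T-k$ summand of the null-space series, which matches the statement exactly.

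The main obstacle is bookkeeping rather than any deep argument: one has to be meticulous in the single-step derivation about how $\beps_\theta(\bx_t,t)$ enters both the denoised estimate $\bx_{0|t}$ and the direct DDIM ``pointing'' term, so that its range- and null-space coefficients combine into the claimed $c_t^0$ (in particular, using $\sqrt{\bar\alpha_{t-1}/\bar\alpha_t}\sqrt{1-\bar\alpha_t} = \sqrt{(1-\bar\alpha_t)/\alpha_t}$ to align with the stated form); and one has to invoke the projector identities at the right places in the induction so that the range-space content of the previous state cancels while the null-space content propagates with the correct telescoping prefactors. Once these algebraic points are under control, the induction is routine.
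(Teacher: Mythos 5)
Your proposal is correct: the single-step recurrence $\bx_{t-1}=\frac{\sqrt{\bar\alpha_{t-1}}}{\sqrt{\bar\alpha_t}}(\bI-\bA^{\dagger}\bA)\bx_t+\bz_t$ follows exactly as you describe from substituting $\hat\bx_{0|t}$ into the DDIM-form update (the two occurrences of $\beps_\theta$ combining into the stated $c_t^0$), and the induction with the projector identities reproduces Eqn.~\eqref{eqn:deqir-xTk}, including the base case $k=1$ and the range-space collapse that leaves only the most recent $\bA^{\dagger}\bA\bz_{T-k+1}$. This is essentially the same unrolling argument as the paper's supplementary proof, so no further comment is needed.
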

\begin{proof}
    Please refer to the proofs in Supplementary.
\end{proof}
From the proposition, $\bx_{T-k}$ is related to subsequent states $\bx_{T-k+1:T}$ and the degraded image $\by$.
It means that our method is different from most existing diffusion model-based IR methods which update $\bx_t$ based only on $\bx_{t+1}$.
Based on our proposition, the timestep $T$ can be small using DDIM \cite{song2020ddim}.
In addition, the proposition can be extended to start from the intermediate state.
Motivated by \cite{yue2022difface}, we can predict the intermediate state using a restoration model (\eg \cite{SWINIR,chen2023activating,zhou2022codeformer,zhang2016colorful}) to provide prior information from the restoration model during the sampling processing when the degradation matrix $\bA$ is unknown or inaccurate.

Based on our proposed proposition, we can formulate the right side of Eqn. \eqref{eqn:deqir-xTk} as $\bx_{T-k}=f(\bx_{T-k+1:T}; \by)$. 
Then, we can write all sampling steps as a ``fully-lower-triangular'' inference process, \ie
\begin{align}
    \label{eqn:deq_triangular}
	\begin{bmatrix}
		\bx_{T{-}1}\\
		\bx_{T{-}2}\\
		\vdots\\
		\bx_{0}
	\end{bmatrix}  &=
	\begin{bmatrix}
		f(\bx_{T}; \by)\\
		f(\bx_{T{-}1:T}; \by)\\
		\vdots\\
		f(\bx_{1:T}; \by)\\
	\end{bmatrix},
\end{align}
where the function $f$ can be implemented in all sequential states in parallel, corresponding to Eqn. \eqref{eqn:deq1}, \ie $\bx_{0: T-1} = F (\bx_{0: T-1}; (\bx_T, \by) )$.
To find the solution to the fixed point of Eqn. \eqref{eqn:deq_triangular}, we apply commonly used fixed point solvers like Anderson acceleration \cite{anderson1965iterative} which can accelerate the convergence of the fixed-point sequence. 
To this end, we first define the residual $g(\bx_{0: T-1}; (\bx_T, \by)) =  F (\bx_{0: T-1}; (\bx_T, \by) ) - \bx_{0: T-1}$. 
Then, we can directly input the residual to the Anderson acceleration solver and obtain the final converged fixed point, \ie
\begin{align}
	\bx_{0: T-1}^* = \mathrm{RootSolve} \left( g(\bx_{0: T-1}; (\bx_T, \by)) \right),
\end{align}
where $\bx_0^*$ is our desired result at the end of sampling, and $\mathrm{RootSolve}(\cdot)$ is a fixed point solver using Anderson acceleration, which is implemented in Algorithm \ref{alg:anderson}.
For convenience, we define $g_k:=g(\bx_{0: T-1}^{(k)}; (\bx_T, \by))$.
Note that $\mathrm{RootSolve}(\cdot)$ can be implemented in the PyTorch package, and we use the same hyper-parameters as \cite{pokle2022deqddim}.
Moreover, Algorithm \ref{alg:anderson} is guaranteed to converge to a fixed point, which is verified in the experiment sections.
Note that we do not train all functions and diffusion models.

\begin{algorithm}[tb]
	\caption{Implementation of $\mathrm{RootSolve}(\cdot)$}
	\label{alg:anderson}
	\textbf{Require}: A degraded image $\by$, a pre-trained diffusion model, timesteps $T$, iterations $K$, an integer parameter $m\geq1$
	\begin{algorithmic}[1] 
		\State Initialize $\bx_T \sim\mathcal{N}(\mathbf{0},\mathbf{I}), \mathbf{x}_{\;i}^{(0)}=\bx_T, i=0,\dots,T{-}1$
        \State Calculate $\mathbf{x}_{0:T-1}^{(1)} = F\left(\mathbf{x}_{0:T-1}^{(0)}; (\bx_T, \by)\right)$ 
        \For{$k$ from $1$ to $K$}
        \State $m_k = \min \{m, k\}$
        \State $\bG_k = [g_{k-m_k}, \ldots, g_k]$
        \State Solve least-squares problem for $\balpha {=} [\alpha_0, \ldots, \alpha_{m_k}]$
        \State \qquad $ \balpha_k = \argmin\nolimits_{\balpha} \| \bG_k \balpha \|_2, s.t., \balpha^{\top} \1= 1$
        \State Update the sequence
        \State $\bx_{0:T-1}^{(k+1)} = \sum_{i=0}^{m_k} (\balpha_k)_i F\left(\bx_{0:T-1}^{(k-m_k+i)}; \left(\bx_T, \by\right)\right)$
        \EndFor
		\State \textbf{return} $\bx_{0}^*:=\bx_{0}^{K+1}$
	\end{algorithmic} \vspace{-1mm}
\end{algorithm}

Compared with most existing diffusion model-based IR methods \cite{chung2023dps,wang2022ddnm}, our method operating all states in parallel results in more accurate estimations of the intermediate latent states $\bx_t$, requiring fewer sampling steps.
It implies that we are able to obtain the better final sample $\bx_0^*$ based on these accurately estimated intermediate latent states $\bx_t$.

\begin{algorithm}[tb]
	\caption{Initialization Optimization via DEQ inversion}
	\label{alg:inversion}
	\textbf{Require}: A degraded image $\by$, a pre-trained diffusion model, update rate $\lambda$, total steps $S$.
	\begin{algorithmic}[1] 
		\State Initialize $\bx_{T}\sim\mathcal{N}(\0, \bI), \bx_{i}=\bx_T, i=0,\ldots, T{-}1$
		\For{steps from $1$ to $S$}
		\State Disable gradient computation, and compute $\bx_{0: T-1}^*$ according to Algorithm \ref{alg:anderson} \vspace{1mm}  
        \State \quad $\bx_{0: T-1}^* = \mathrm{RootSolve} \left( g(\bx_{0: T-1}; (\bx_T, \by)) \right)$ \vspace{1mm} 
		\State Enable gradient computation, and compute loss and use the 1-step grad $\partial \mL / \partial \bx_T$
        \State Update $\bx_T$ with a gradient descent: \vspace{1mm} 
        \State \qquad\qquad $\bx_T \leftarrow \bx_T + \lambda {\partial \mL} / {\partial \bx_T}$ \vspace{1mm}  
        \EndFor
		\State \textbf{return} $\bx_{T}$
	\end{algorithmic}
\end{algorithm}

\subsection{Initialization Optimization via DEQ Inversion}
Different initializations have diverse generations in some IR tasks, \eg colorization and inpainting. 
However, such diversity of generation is hard to control, and it is harmful to SR or deblurring which requires guaranteeing the identity.
To address this, we provide an interesting perspective to explore the initialization of our diffusion model. 

To achieve this, we first define a general loss function that can provide additional information. 
Specifically, given a degraded image $\by$ and the output of $\mathrm{RootSolve}$, \ie $\bx_0^*$, then the loss can be defined as
\begin{align}
    \mL = \ell \left( \phi(\bx_0^*), \varphi(\by) \right),
    \label{eqn:loss}
\end{align}
where $\ell$ can be $L_2$ loss or perceptual loss. 
For example, $\phi$ can be $\bA$ and $\varphi$ is an identity function; or $\phi$ is an identity function and $\varphi$ is a pre-trained IR model \cite{SWINIR,chen2023activating}. 
Based on the loss, we apply the implicit function theorem to compute the gradients of the loss $\mL$ \wrt $\bx_T$, \ie
\begin{equation}
    \!\!\dfrac{\partial \mathcal{L}}{\partial \bx_T} {=} {-} \dfrac{\partial \mathcal{L}}{\partial \bx^*_{0:T}} \left(J_{g}^{-1} \big|_{\bx^*_{0:T}}\right)  \dfrac{\partial F(\bx^*_{0:T-1}; (\bx_T, \by))}{\partial \bx_T},\!
    \label{eqn:deq-ift-backward}
\end{equation}
where $J_{g}^{-1} \big|_{\bx^*_{0:T}}$ is inverse Jacobian of $g(\bx_{0:T-1}; (\bx_T, \by))$ evaluated at $\bx_{0: T}^*$. In practical, we use an approximation version, \ie $\bM\approx J_{g}^{-1} |_{\bx^*_{0:T}}$, \eg 1-step gradient (\ie $\bM=\bI$)  \cite{fung2021fixed,geng2020attention,geng2021training}.
Note that the pre-trained diffusion model is frozen.
The gradients can be computed by using standard autograd packages in PyTorch.
Then, $\bx_T$ can be updated along the gradient, as shown in Algorithm \ref{alg:inversion}.

Different from existing diffusion model-based IR methods which have a large computational graph to store the gradients in the whole process, our method is more efficient due to the DEQ inversion.
In addition, with the help of the inversion method, our zero-shot IR methods can be extended to supervised learning by replacing the loss \eqref{eqn:loss} with $\mL=\| \bx_0^* - \bx_0 \|_F^2$ which we leave it in the future work.
\begin{table*}[t]
	\centering
    \resizebox{2.1\columnwidth}{!}{	
	\begin{tabular}{llccccccccccccccc}
		\toprule
		\multicolumn{1}{l}{\multirow{2}{*}{\!\!{Datasets}\!\!}} & \multirow{2}{*}{{Methods}} & \multicolumn{3}{c}{{2$\times$SR}} & \multicolumn{3}{c}{{4$\times$SR}} & \multicolumn{3}{c}{{Deblur (Gaussian)}} & \multicolumn{3}{c}{{Deblur (anisotropic)}} & {NFEs} \\ 
        \rule{0pt}{7pt} & & PSNR$\uparrow$&\!\!\!SSIM$\uparrow$\!\!\!&LPIPS$\downarrow$ & PSNR$\uparrow$&\!\!\!SSIM$\uparrow$\!\!\!&LPIPS$\downarrow$ &  PSNR$\uparrow$&\!\!\!SSIM$\uparrow$\!\!\!&LPIPS$\downarrow$ & PSNR$\uparrow$&\!\!\!SSIM$\uparrow$\!\!\!&LPIPS$\downarrow$ & {/Iters} \\
		\midrule
        \multirow{8}{*}{\!\!{ImageNet}\!\!}
        & Baseline 
        & 29.63 & \!\!\!\!0.875\!\!\!\! & 0.165 
        & 25.15 & \!\!\!\!0.699\!\!\!\! & 0.351 
        & 18.22 & \!\!\!\!0.529\!\!\!\! & 0.433 
        & 20.86 & \!\!\!\!0.544\!\!\!\! & 0.480  
        & - \\    
        & {DGP} \cite{pan2021dgp} 
        & 22.32 & \!\!\!\!0.583\!\!\!\! & 0.426 
        & 18.35 & \!\!\!\!0.398\!\!\!\! & 0.529 
        & 21.81 & \!\!\!\!0.522\!\!\!\! & 0.472 
        & 20.77 & \!\!\!\!0.459\!\!\!\! & 0.504 
        & 1500 \\
        & {DPS}  \cite{chung2023dps}  
        & 22.40 & \!\!\!\!0.597\!\!\!\! & 0.405 
        & 20.34 & \!\!\!\!0.488\!\!\!\! & 0.464 
        & 22.04 & \!\!\!\!0.569\!\!\!\! & 0.394 
        & 21.82 & \!\!\!\!0.561\!\!\!\! & 0.381 
        & 1000 \\
        & {ILVR} \cite{choi2021ilvr} 
        & 23.36 & \!\!\!\!0.613\!\!\!\! & 0.334
        & 22.76 & \!\!\!\!0.583\!\!\!\! & 0.383
        & - & - & - & - & - & - 
        & 100 \\
        & {DiffPIR} \cite{zhu2023diffpir} 
        & 27.16 & \!\!\!\!0.790\!\!\!\! & 0.214	
        & 24.31 & \!\!\!\!0.649\!\!\!\! & 0.350 
        & 25.32 & \!\!\!\!0.673\!\!\!\! & 0.296 
        & 23.37 & \!\!\!\!0.535\!\!\!\! & 0.439 
        & 100 \\ 
        & {DDRM} \cite{kawar2022ddrm} 
        & \rd 31.43 & \!\!\!\!\rd 0.906\!\!\!\! & \rd 0.117 
        & \rd 26.21 & \!\!\!\!\rd 0.745\!\!\!\! & \rd 0.288 
        & \rd 40.70 & \!\!\!\!\rd 0.978\!\!\!\! & \rd 0.040 
        & \rd 37.69 & \!\!\!\!\rd 0.964\!\!\!\! & \rd 0.057 
        & 20 \\
		& {DDNM} \cite{wang2022ddnm}  
        & \nd {31.81} & \!\!\!\!\nd {0.908}\!\!\!\! & \nd {0.097} 
        & \nd {26.49} & \!\!\!\!\nd {0.753}\!\!\!\! & \nd {0.266} 
        & \fs {43.83} & \!\!\!\!\fs {0.989}\!\!\!\! & \fs {0.018} 
        & \nd {38.40} & \!\!\!\!\nd {0.970}\!\!\!\! & \nd {0.038} 
        & 100 \\
		& {\textbf{DeqIR (Ours)}\!\!} 
        & \fs 32.35 & \fs \!\!\!\!0.913\!\!\!\! & \fs 0.082
        & \fs 27.47 & \fs \!\!\!\!0.781\!\!\!\! & \fs 0.230
        & \nd {43.42} & \!\!\!\!\nd{0.987}\!\!\!\! & \nd{0.021} 
        & \fs 39.47 & \!\!\!\!\fs{0.973}\!\!\!\! & \fs 0.036
        & {15} \\
		\midrule
        \multirow{8}{*}{\!\!{CelebA-HQ}\!\!\!}
        & {Baseline} 
        & 35.87 & \!\!\!\!0.953\!\!\!\! & 0.099
        & 30.12 & \!\!\!\!0.857\!\!\!\! & 0.240 
        & 18.94 & \!\!\!\!0.704\!\!\!\! & 0.337 
        & 23.16 & \!\!\!\!0.727\!\!\!\! & 0.354  
        & - \\ 
        & \rule{0pt}{7pt}{DGP} \cite{pan2021dgp} 
        & 28.61 & \!\!\!\!0.809\!\!\!\! & 0.279 
        & 25.25 & \!\!\!\!0.690\!\!\!\! & 0.405 
        & 27.02 & \!\!\!\!0.738\!\!\!\! & 0.372 
        & 25.73 & \!\!\!\!0.663\!\!\!\! & 0.426 
        & 1500 \\
        & \rule{0pt}{7pt}{DPS} \cite{chung2023dps}  
        & 28.71 & \!\!\!\!0.818\!\!\!\! & 0.219 
        & 25.01 & \!\!\!\!0.710\!\!\!\! & 0.282 
        & 27.56 & \!\!\!\!0.775\!\!\!\! & 0.229 
        & 26.91 & \!\!\!\!0.754\!\!\!\! & 0.234 
        & 1000 \\
        & \rule{0pt}{7pt}{ILVR} \cite{choi2021ilvr} 
        & 27.31 & \!\!\!\!0.783\!\!\!\! & 0.234
        & 27.09 & \!\!\!\!0.775\!\!\!\! & 0.245
        & - & - & - & - & - & - & 100 \\
        & \rule{0pt}{7pt}{DiffPIR} \cite{zhu2023diffpir} 
        & 32.51 & \!\!\!\!0.882\!\!\!\! & 0.156 
        & 28.60 & \!\!\!\!0.795\!\!\!\! & 0.228 
        & 30.63 & \!\!\!\!0.835\!\!\!\! & 0.197 
        & 29.32 & \!\!\!\!0.802\!\!\!\! & 0.232 
        & 100 \\ 
        & \rule{0pt}{7pt}{DDRM} \cite{kawar2022ddrm} 
        & \fs {36.76} & \!\!\!\!\nd {0.953}\!\!\!\! & \rd 0.074 
        & \nd {31.91} & \!\!\!\!\nd {0.880}\!\!\!\! & \nd {0.149} 
        & \rd 43.06 & \!\!\!\!\rd 0.983\!\!\!\! & \rd 0.036 
        & \rd 41.27 & \!\!\!\!\rd 0.976\!\!\!\! & \rd 0.053 
        & 20 \\
		& \rule{0pt}{7pt}{DDNM} \cite{wang2022ddnm}  
        & \rd 36.37 & \!\!\!\!\rd {0.950}\!\!\!\! & \nd {0.065} 
        & \rd 31.86 & \!\!\!\!\rd 0.876\!\!\!\! & \fs {0.136} 
        & \nd {46.99} & \!\!\!\!\nd {0.991}\!\!\!\! & \nd {0.021} 
        & \nd {43.43} & \!\!\!\!\nd {0.983}\!\!\!\! & \nd {0.037} 
        & 100 \\
		& \rule{0pt}{7pt}{\textbf{DeqIR (Ours)\!\!}} 
        & \nd {36.63} & \!\!\!\!\fs {0.954}\!\!\!\! & \fs {0.062} 
        & \fs {32.22} & \!\!\!\!\fs {0.889}\!\!\!\! & \rd 0.155 
        & \fs 47.18 & \fs \!\!\!\!0.992\!\!\!\! & \fs 0.019
        & \fs 43.57 & \fs \!\!\!\!0.984\!\!\!\! & \fs 0.036
        & {15} \\
		\bottomrule
	\end{tabular}
    }
    \vspace{-1.5mm}
	\caption{Quantitative results of zero-shot IR methods (including \textbf{super-resolution} and \textbf{deblurring}) on ImageNet and CelebA-HQ.  
    Best results are highlighted as \colorbox{colorFst}{\bf \!first\!}, \colorbox{colorSnd}{\!second\!} and \colorbox{colorTrd}{\!third\!}.}
	\label{tb:sr_deblur}
    \vspace{-4.5mm}
\end{table*}

\begin{figure*}[t]
  \centering
   \includegraphics[width=1\linewidth]{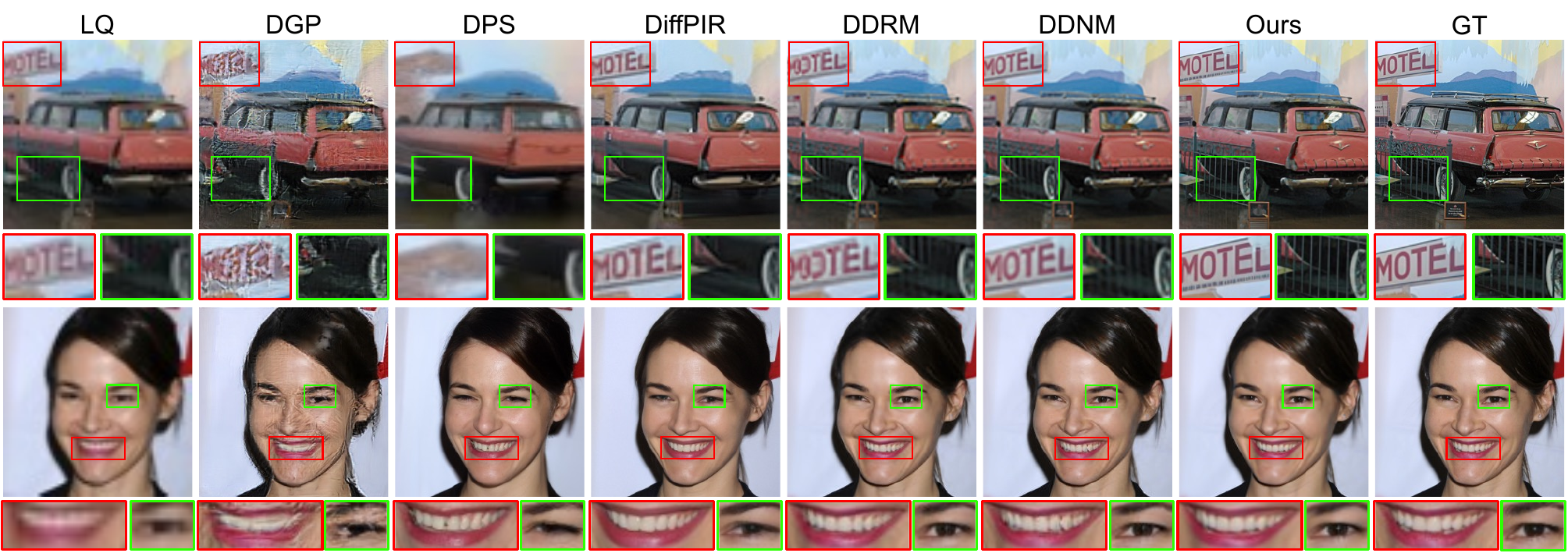}
   \vspace{-7mm}
   \caption{Qualitative results of zero-shot $4{\times}$ super-resolution methods on ImageNet and CelabA-HQ.}
   \vspace{-5mm}
   \label{fig:sr}
\end{figure*}

\section{Experiments}
\vspace{-1.5mm}
\noindent\textbf{Experiment settings.}
We conduct typical IR tasks, including SR, deblurring, colorization, and inpainting.
Specifically, we consider 2$\times$ and 4$\times$ bicubic downsampling for SR, Gaussian and anisotropic for deblurring, use an average grayscale operator in colorization, and use text and stripe masks in inpainting. 
For convenience, we choose ImageNet \cite{deng2009imagenet} and CelebA-HQ \cite{karras2018progressive} with 100 classes \cite{zhu2023diffpir} and the image size of $256{\times}256$ for validation, which have the same trend on 1k classes.
For fair comparisons, we use the same pre-trained diffusion models \cite{dhariwal2021diffusion} and \cite{lugmayr2022repaint} for ImageNet and CelebA-HQ, respectively. 
More details are put in Supplementary.

\begin{figure*}[t]
  \centering
   \includegraphics[width=1\linewidth]{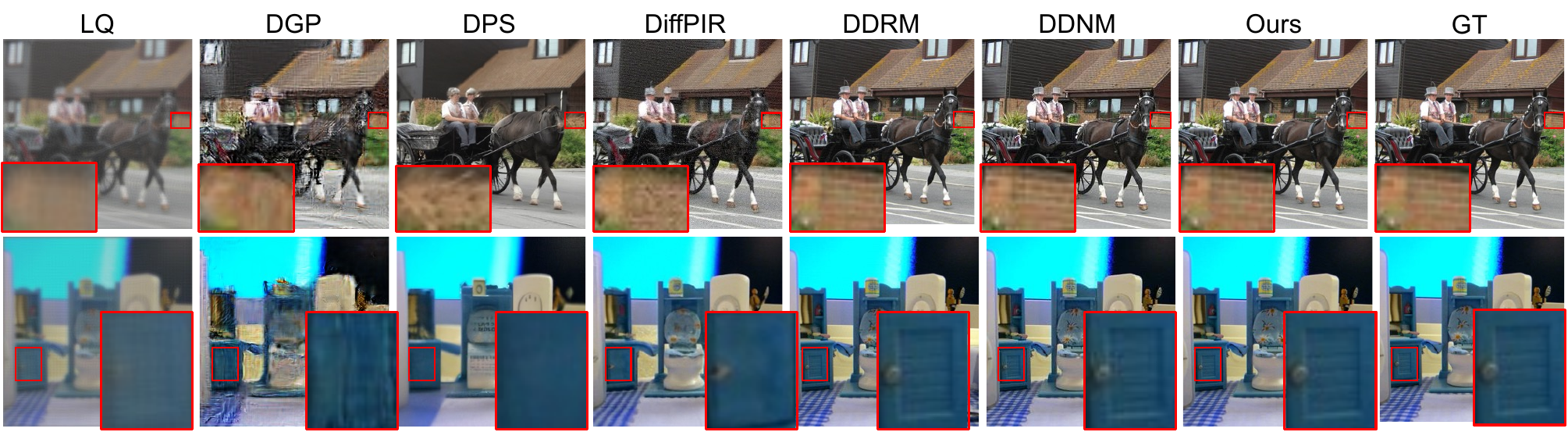}
   \vspace{-7mm}
   \caption{Qualitative results of zero-shot image deblurring (Gaussian) methods.}
   \vspace{-3mm}
   \label{fig:deblur}
\end{figure*}

\begin{table*}[t]
	\centering
    \begin{minipage}[t]{0.48\linewidth}
    \resizebox{1\columnwidth}{!}{	
	\begin{tabular}{lcccccc}
		\toprule
        \multirow{2}{*}{\!\!{Methods}} & \multicolumn{3}{c}{{Text mask}} & \multicolumn{3}{c}{{Stripe mask}} \\
        & PSNR$\uparrow$ &\!\!\!SSIM$\uparrow$\!\!\!&LPIPS$\downarrow$ & 
          PSNR$\uparrow$ &\!\!\!SSIM$\uparrow$\!\!\!&LPIPS$\downarrow$  \\
		\midrule
        \!\!Baseline
        & 14.55 & \!\!\!0.642\!\!\! & 0.515 
        & 9.02  & \!\!\!0.131\!\!\! & 0.730 \\    
        \rule{0pt}{7pt}{\!\!Palette \cite{saharia2022palette}} 
        & 38.09 & \!\!\!\rd {0.978}\!\!\! & \rd {0.027} 
        & 25.91 & \!\!\!0.733\!\!\! & 0.343 \\
        \rule{0pt}{7pt}{\!\!DDRM \cite{kawar2022ddrm}} 
        & 37.25 & \!\!\!0.969\!\!\! & 0.223 
        & 34.34 & \!\!\!0.933\!\!\! & 0.223 \\    
        \rule{0pt}{7pt}{\!\!RePaint \cite{lugmayr2022repaint}} 
        & \rd 38.54 & \!\!\!0.974\!\!\! & 0.039 
        & \rd {36.25} & \!\!\!\nd {0.951}\!\!\! & \nd {0.086} \\  
        \rule{0pt}{7pt}{\!\!DDNM \cite{wang2022ddnm}} 
        & \nd {39.45} & \!\!\!\nd {0.980}\!\!\! & \fs {0.023} 
        & \nd {36.75} & \!\!\!\fs {0.957}\!\!\! & \fs {0.076} \\    
		\rule{0pt}{7pt}{\!\!\textbf{DeqIR (Ours)}\!\!} 
        & \fs {39.72} & \!\!\!\fs {0.981}\!\!\! & \nd {0.026} 
        & \fs {36.99} & \!\!\!\rd {0.948}\!\!\! & \rd {0.091} \\
		\bottomrule
	\end{tabular}
    }
    \vspace{-1mm}
	\caption{Comparisons of zero-shot \textbf{inpainting} methods on CelebA. }
    \vspace{-5mm}
	\label{tb:inpaint}
    \end{minipage} 
    \hfill
    \begin{minipage}[t]{0.48\linewidth}
	\centering
    \resizebox{1\columnwidth}{!}{	
	\begin{tabular}{lcccccc}
		\toprule
        \multirow{2}{*}{{\!\!Methods}} & \multicolumn{3}{c}{{ImageNet}} & \multicolumn{3}{c}{{CelebA-HQ}} \\
        & Cons$\downarrow$& \!\!\!\!LPIPS$\downarrow$\!\!\!\! & FID$\downarrow$ & Cons$\downarrow$ & \!\!\!\!LPIPS$\downarrow$\!\!\!\! & FID$\downarrow$  \\
		\midrule 
        \!\!{Baseline} 
        & 0 & \!\!\!\!0.196\!\!\!\! & 90.93 
        & 0 & \!\!\!\!0.210\!\!\!\! & 70.69 \\
        \rule{0pt}{10pt}{\!\!DGP \cite{pan2021dgp}} 
        & -	& \!\!\!\!0.256\!\!\!\! & 99.86 
        & - & \!\!\!\!0.218\!\!\!\!  & 73.24 \\
        \rule{0pt}{10pt}{\!\!DDRM \cite{kawar2022ddrm}} 
        & 265.08 & \!\!\!\!0.223\!\!\!\! & 79.42 
        & 472.25 & \!\!\!\!0.245\!\!\!\! & 57.29 \\    
        \rule{0pt}{10pt}{\!\!DDNM \cite{wang2022ddnm}} 
        & \nd 45.07	& \!\!\!\!\nd 0.186\!\!\!\! & \nd 77.21 
        & \nd 51.43	& \!\!\!\!\nd 0.139\!\!\!\! & \nd 45.73 \\    
		\rule{0pt}{10pt}{\!\!\textbf{DeqIR (Ours)}} 
        & \fs 43.15	& \!\!\!\!\fs 0.171\!\!\!\! & \fs 70.94 
        & \fs 50.16 & \!\!\!\!\fs 0.092\!\!\!\! & \fs 43.98 \\
		\bottomrule
	\end{tabular}
    }
    \vspace{-1mm}
	\caption{Quantitative results of zero-shot \textbf{colorization} methods. }
    \vspace{-5mm}
	\label{tb:color}
    \end{minipage}
\end{table*}

\vspace{1mm}
\noindent\textbf{Evaluation metrics.}
We use PSNR, SSIM and LPIPS as the evaluation metrics for most IR tasks.
For the task of colorization, we use the Consistency metric \cite{wang2022ddnm} and FID because PSNR and SSIM cannot reflect the performance \cite{wang2022ddnm}.
In general, higher PSNR and SSIM, and lower LPIPS and FID mean better performance.
In addition, we report the number of NFEs (timesteps) or iterations for each method.

\subsection{Evaluation on Image Super-Resolution}
\vspace{-1.5mm}
We compare our method with a GAN-based IR method (\eg DGP) and SOTA zero-shot diffusion model-based IR methods (\eg DPS \cite{chung2023dps}, DiffPIR \cite{zhu2023diffpir}, DDRM \cite{kawar2022ddrm} and DDNM \cite{wang2022ddnm}) on ImageNet and CelebA-HQ datasets. 
In addition, we use the bicubic upscaling as a baseline for SR.

In Table \ref{tb:sr_deblur}, our method outperforms most methods under different metrics on both ImageNet and CelebA-HQ.
In particular, compared with the competitive IR method DDNM, our method on ImageNet surpasses it by an LPIPS margin of up to 0.036, and by a PSNR margin of up to 0.98dB.
Moreover, our method only needs 15 iteration steps, compared with DDNM (100 steps).
We provide more details and quantitative results of other scales in Supplementary.

For the qualitative results, our method achieves the best visual quality containing more realistic textures, as shown in Figure \ref{fig:sr}. 
These visual comparisons align with the quantitative results, demonstrating the effectiveness of our method.
More visual results are put in Supplementary Materials.

\subsection{Evaluation on Image Deblurring}
\vspace{-1.5mm}
We compare the same zero-shot IR methods used in the SR task.
In addition, we use $\bA^{\dag} \by$ as a baseline. 
In this experiment, we mainly consider Gaussian and anisotropic kernels to evaluate the performance of all models.

In Table \ref{tb:sr_deblur}, the quantitative results show that our method achieves the best performance on all datasets, except for Gaussian deblurring on ImageNet. 
Compared with DDNM \cite{wang2022ddnm}, the PSNR improvement of our method can be up to 1.07dB for anisotropic deblurring.
In Figure \ref{fig:deblur}, our generated images have the best visual quality with more realistic details which are close to GT images. 
We provide more quantitative and qualitative results (including more kernels) in Supplementary Materials.

\subsection{Evaluation on Image Inpainting}
\vspace{-1mm}
For the image inpainting task, we compare our method with SOTA inpainting methods, including Palette \cite{saharia2022palette}, RePaint \cite{lugmayr2022repaint}, DDRM \cite{kawar2022ddrm} and DDNM \cite{wang2022ddnm}. 
We also use $\bA^{\dag} \by$ as a baseline.
In addition, we consider the text mask and stripe mask as examples and show the results on CelebA-HQ in Table \ref{tb:inpaint}. 
The results of more masks and results on ImageNet are put in Supplementary Materials.

In Table \ref{tb:inpaint}, our method outperforms Palette \cite{saharia2022palette} and DDRM \cite{kawar2022ddrm} significantly, and has comparable performance with RePaint \cite{lugmayr2022repaint} and DDNM \cite{wang2022ddnm}. 
In Figure \ref{fig:inpainting}, taking the ``mouth'' in the generated face images as an example, our method generates clear structures and details that are not only more realistic but also more reasonable compared to other inpainting methods. 
In contrast, other methods may introduce blur artifacts.

\begin{figure}[h]
  \centering
  \vspace{-2mm}
   \includegraphics[width=1\linewidth]{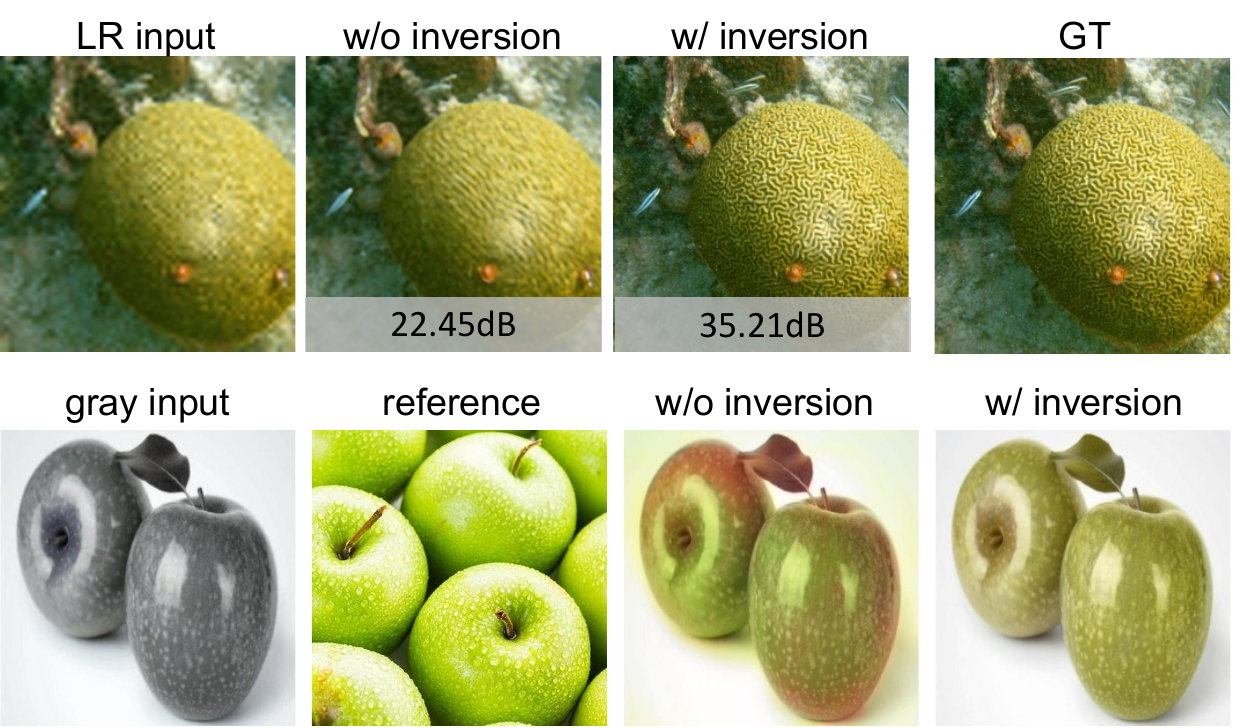}
   \vspace{-7mm}
   \caption{Interesting applications of DEQ inversion.}
   \vspace{-4mm}
   \label{fig:inversion}
\end{figure}

\subsection{Evaluation on Image Colorization}
\vspace{-1.5mm}
We compare our method with SOTA methods (\ie DGP \cite{pan2021dgp}, DDRM \cite{kawar2022ddrm} and DDNM \cite{wang2022ddnm}).
We also use $\bA^{\dag} \by$ as a baseline.
In addition to LPIPS, we additionally use the Consistency metric and FID to evaluate the image quality.

In Table \ref{tb:color}, our method achieves the best performance on both ImageNet and CelebA-HQ under different metrics. 
As shown in Figure \ref{fig:colorization}, our method restores images with reasonable color.
In contrast, other methods may restore part of the color (as observed in the ``tree'') or unreasonable color (\eg evident in the ``building'' in DGP \cite{pan2021dgp}).

\begin{figure*}[t]
  \centering
   \vspace{-3mm}
   \includegraphics[width=1\linewidth]{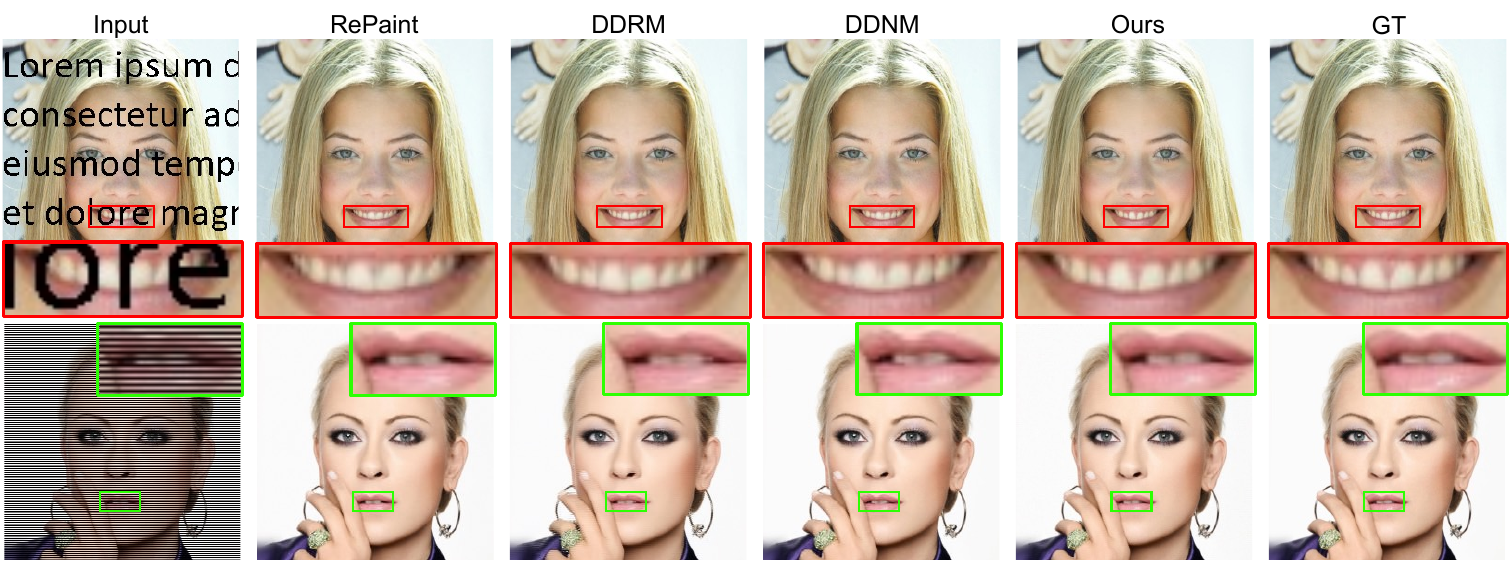}
   \vspace{-7mm}
   \caption{Qualitative results of image inpainting methods on CelebA-HQ.}
   \vspace{-4mm}
   \label{fig:inpainting}
\end{figure*}

\begin{figure*}[t]
  \centering
   \vspace{-0.5mm}
   \includegraphics[width=1\linewidth]{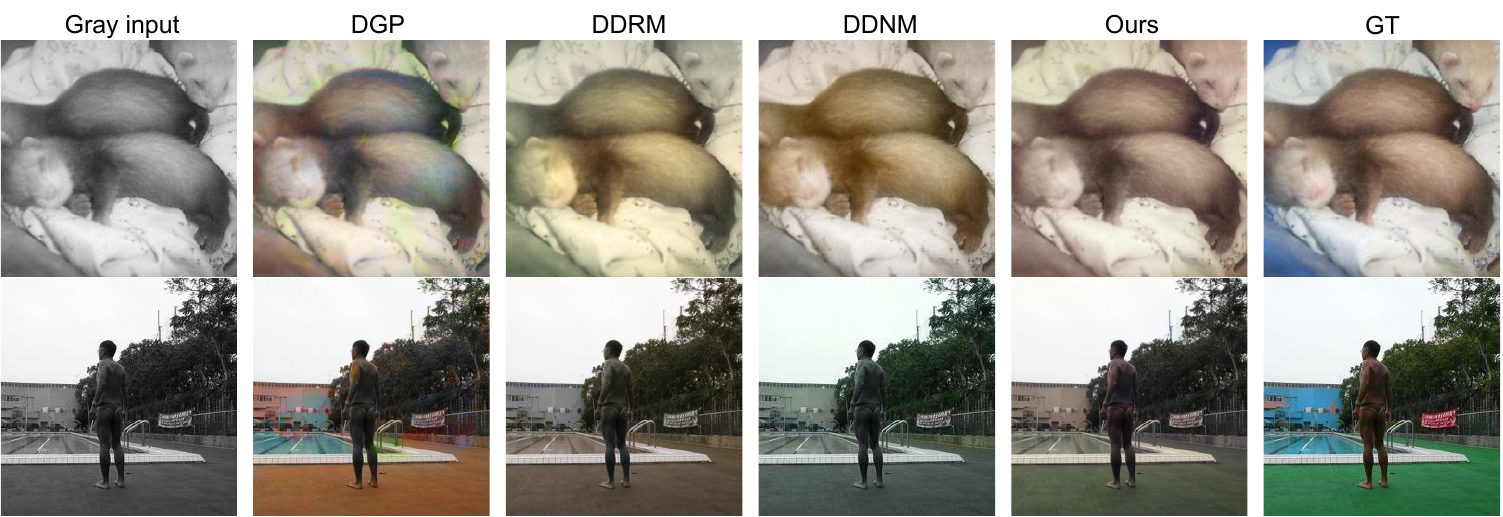}
   \vspace{-7mm}
   \caption{Qualitative results of image colorization methods on ImageNet.}
   \vspace{-5mm}
   \label{fig:colorization}
\end{figure*}

\subsection{Evaluation on DEQ Inversion}
\vspace{-1.5mm}
We extend our method using DEQ inversion to interesting applications, \eg SR with optimized initialization (top) and reference-based colorization (bottom), as shown in Figure \ref{fig:inversion}.
We found that optimizing the initialization is able to improve PSNR and control the generation in the desired direction.
More details and results are put in Supplementary.

\begin{figure*}[t]
	\centering
	\includegraphics[width=1\linewidth]{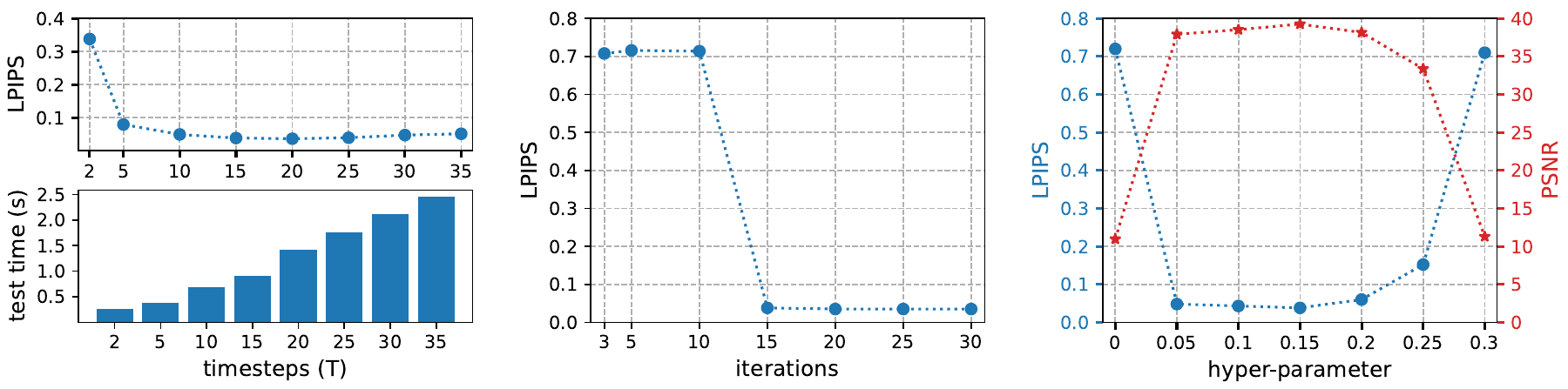}
	\vspace{-8mm}
	\caption{Ablation study of timesteps (left), iteration (middle) and hyper-parameters (right) for anisotropic deblurring on ImageNet.}
	\vspace{-7mm}
	\label{fig:ablation}
\end{figure*}

\begin{figure}[t]
	\centering
	\includegraphics[width=1\linewidth]{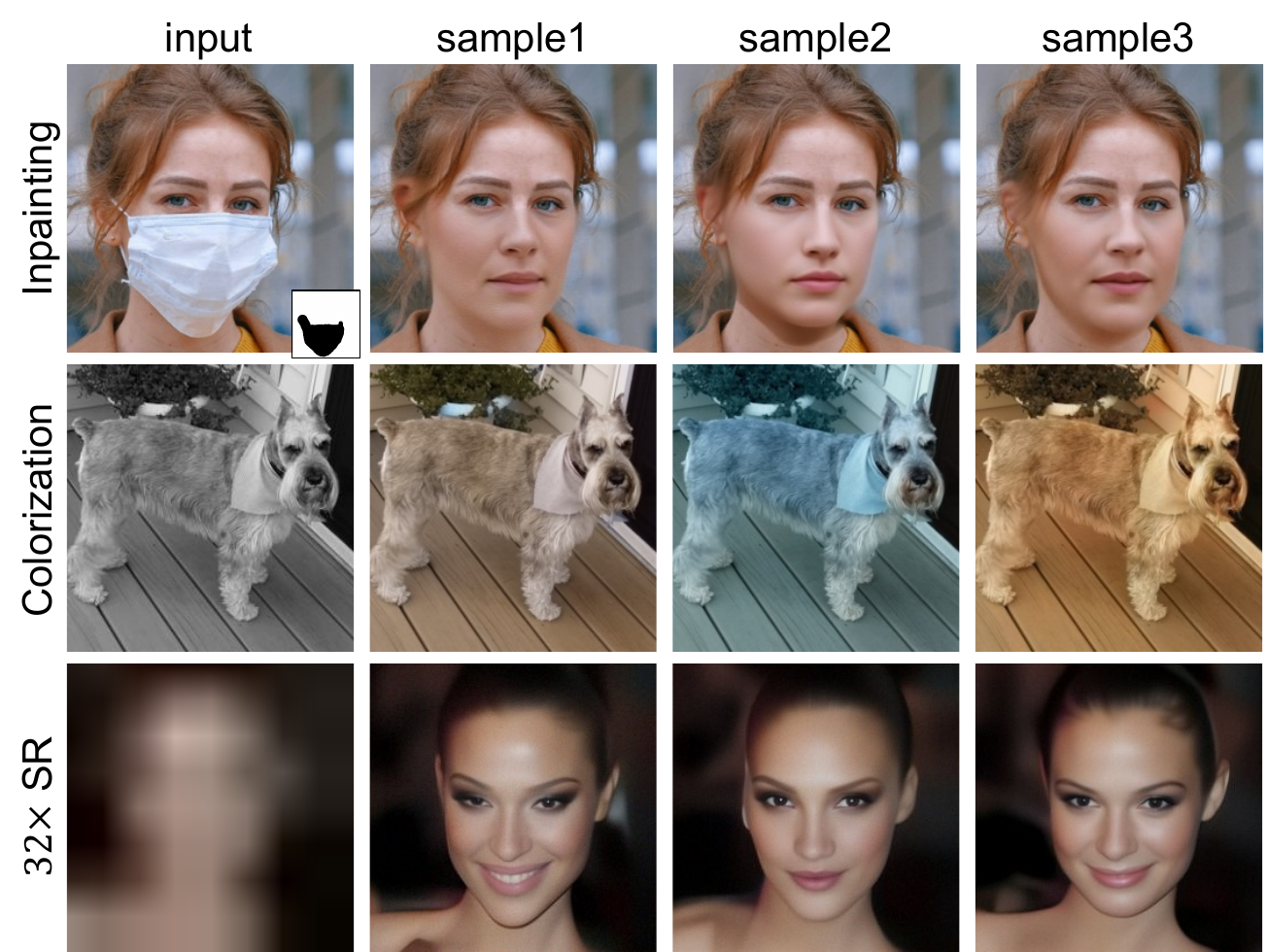}
	\vspace{-7mm}
	\caption{Diversity of generation of our method.}
	\label{fig:diversity}
	\vspace{-6mm}
\end{figure}

\subsection{Ablation Study}
\vspace{-1mm}
\noindent\textbf{Effect of timesteps.}
We study the impact of timesteps in our diffusion models.
Specifically, we change the number of timesteps from 2 to 35. 
In Figure \ref{fig:ablation} (left), the image quality improves with additional timesteps until it stabilizes.
However, more timesteps lead to a larger memory and slower convergence.
To trade off between performance and efficiency, we set the timesteps to 20 in this experiment.

\vspace{1mm}
\noindent\textbf{Effect of iterations.}
We investigate the impact of varying the number of iterations in the Anderson acceleration in Figure \ref{fig:ablation} (middle).
Increasing the number of iterations results in improved performance.
As we can see, 15 iterations are sufficient to converge to satisfactory results.

\vspace{1mm}
\noindent\textbf{Effect of hyper-parameter $\eta$.}
We further investigate the influence of the hyper-parameter $\eta$ in our proposed analytic formulation, \ie Eqn. \eqref{eqn:deqir-xTk}.
In Figure \ref{fig:ablation} (right), different values of the hyper-parameter have different effects on the performance.
Larger values introduce more noise in the generated image, while smaller values may limit the restoration performance. 
Therefore, we set the hyper-parameter $\eta$ to 0.15 in this task.

\subsection{Diversity of Generation}
\vspace{-1.5mm}
To investigate the ability of our method, we show diverse results for different tasks in Figure \ref{fig:diversity}.
With different seeds, our method is able to generate diverse images with realistic details on inpainting and colorization.
For $32{\times}$ SR, the input face image is severely degraded, and the generated faces are realistic but they are difficult to retain the identity.

\subsection{Real-World Applications}
\vspace{-1.5mm}
Our method can be applied in real-world settings which may have unknown, non-linear and complex degradations.

\vspace{1mm}
\noindent\textbf{Old photo restoration.}
The degradations in old photo restoration suffer from non-linear and unknown artifacts. 
Such artifacts are often covered by a hand-drawn mask (denoted by $\bA_{\text{mask}}$).
The degradation can be a composite of $\bA_{\text{mask}}$ and a colorization degradation (denoted by $\bA_{\text{color}}$), and its pseudo-inverse can also be constructed by hand.
In Figure \ref{fig:old_cor_real_sr} (top), our method achieves a remarkable enhancement with facial details, effectively reducing the visible artifacts while preserving finer details.
The inpainting and colorization results serve as a compelling illustration of the effectiveness of our old photo restoration technique.

\noindent\textbf{Real-world SR.}
Real-world degradations may have non-Gaussian noise, unknown compression noise and downscaling.
We use a restoration model \cite{restormer} to provide the prior information to the input noise.
As shown in Figure \ref{fig:old_cor_real_sr} (bottom), our method achieves good robustness to the real noise. 
Notably, our method successfully preserves the facial identity and produces realistic results with rich details.

\vspace{1mm}
\noindent\textbf{Arbitrary size.}
Our method can also be used in images with arbitrary sizes.
Similarly to \cite{wang2022ddnm,SWINIR}, we crop a large-size image as multiple overlapped patches and then test each patch. 
Last we concatenate the generation as the final results.
We put the results in Supplementary due to the limited space.

\subsection{Further Experiments}
\vspace{-1.5mm}
\noindent\textbf{Running time.}
We compare the running time of different methods for anisotropic deblurring on ImageNet. 
For fair comparisons, we evaluate all methods on $256{\times}256$ input images on NVIDIA TITAN RTX using their publicly available code. 
In Table \ref{tb:runtime}, our method with 10 steps has a comparable running time to DDNM \cite{wang2022ddnm}.
DDRM \cite{kawar2022ddrm} with 20 steps is faster than our method, but it is worse than our method.

\vspace{1mm}
\noindent\textbf{Comparisons with supervised learning.}
We compare our zero-shot method with supervised learning methods in Table \ref{tb:supervised}.
Our method outperforms GAN-based methods and LDM \cite{rombach2022ldm}, but it is worse than DiffIR \cite{xia2023diffir}.
However, these methods have limited generalization on other tasks.

\begin{figure}[t]
	\centering
	\includegraphics[width=1\linewidth]{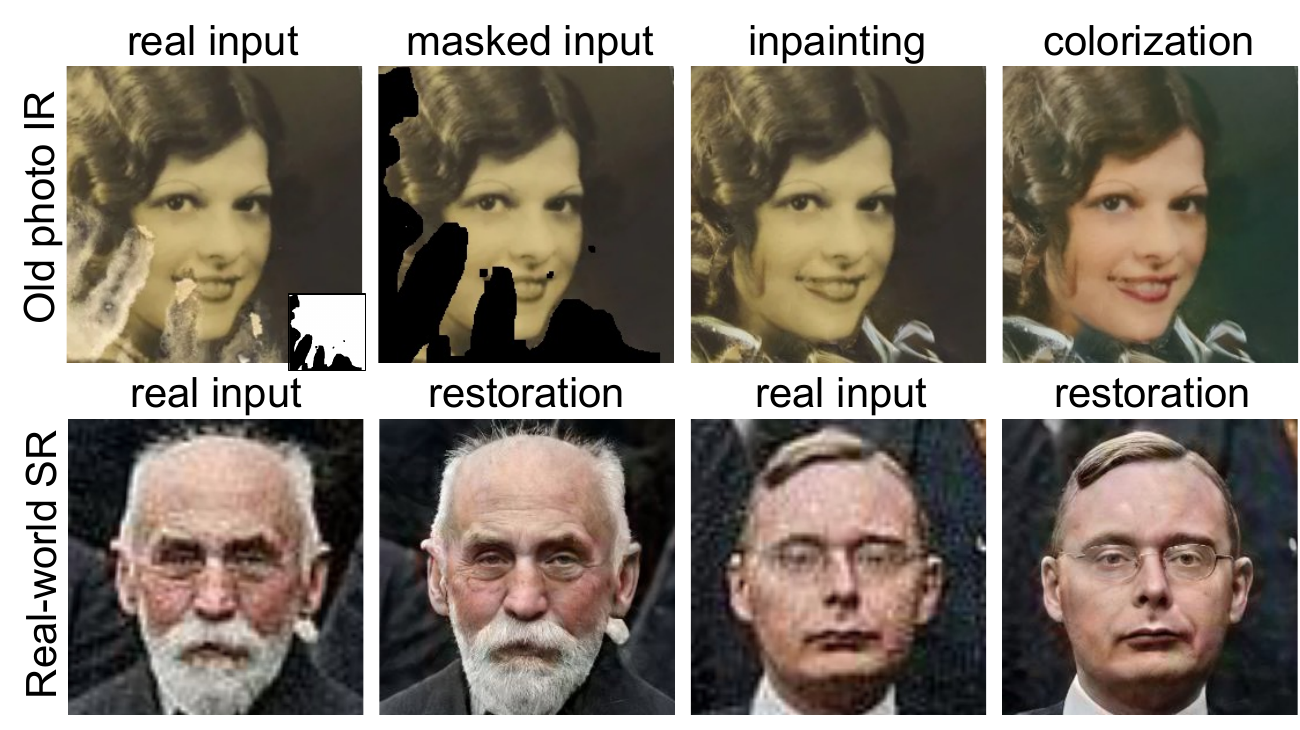}
	\vspace{-8mm}
	\caption{Real-world applications of our method.}
	\vspace{-3mm}
	\label{fig:old_cor_real_sr}
\end{figure}

\begin{table}[t]
	\centering
	\resizebox{1\columnwidth}{!}{	
		\begin{tabular}{lccccccc}
			\toprule
			Methods\!\!\!\! & \!\!DPS \cite{chung2023dps}\!\! & \!\!DDRM \cite{kawar2022ddrm}\!\! & \!\!DDNM \cite{wang2022ddnm}\!\! & \!\!Ours-10\!\! & \!\!Ours-15\!\! & \!\!Ours-20\!\! \\	
			\midrule  
			{{Time (s)}} 
			& 468.85 & \fs 5.26 & \rd 12.67 & \nd 12.11 & 16.53 & 21.19 \\
			{{PSNR$\uparrow$}} 
			& 21.82 & 37.69 & 38.40 & \rd 38.58 & \nd 39.21 & \fs 39.47 \\
			\bottomrule
		\end{tabular}
	}
	\vspace{-1mm}
	\caption{Running time of different methods. $*$-T: T timesteps.
	}
	\vspace{-3mm}
	\label{tb:runtime}
\end{table}

\begin{table}[t]
	\centering
	\resizebox{1\columnwidth}{!}{	
		\begin{tabular}{lccccccccccc}
			\toprule
			\multirow{2}{*}{{Methods}} & \multicolumn{3}{c}{{ImageNet}} & \multicolumn{3}{c}{{CelebaA-HQ}} \\
			& PSNR$\uparrow$ & \!\!\!SSIM$\uparrow$\!\!\! & LPIPS$\downarrow$ & PSNR$\uparrow$ & \!\!\!SSIM$\uparrow$\!\!\! & LPIPS$\downarrow$   \\
			\midrule
			{SRGAN \cite{ledig2017srgan}} 
			& 24.83 & \!\!\!0.696\!\!\! & \nd 0.245 
			& 31.16 & \!\!\!0.868\!\!\! & 0.164 \\  
			{BSRGAN \cite{zhang2021bsrgan}\!\!\!\!} 
			& 23.65 & \!\!\!0.651\!\!\! & 0.331 
			& 27.80 & \!\!\!0.808\!\!\! & 0.216 \\  
			{LDM \cite{rombach2022ldm}} 
			& 22.34 & \!\!\!0.606\!\!\! & 0.318 
			& 27.18 & \!\!\!0.783\!\!\! & 0.208 \\
			{DiffIR \cite{xia2023diffir}} 
			& \fs 29.25 & \!\!\!\fs 0.814\!\!\! & \fs 0.235 
			& \fs 34.96 & \!\!\!\fs 0.924\!\!\! & \fs 0.121 \\
			{\textbf{DeqIR (Ours)}} 
			& \nd 27.44 & \!\!\!\nd 0.782\!\!\! & \fs 0.235 
			& \nd 32.19 & \!\!\!\nd 0.887\!\!\! & \nd 0.154 \\
			\bottomrule
		\end{tabular}
	}
	\vspace{-1mm}
	\caption{Comparisons of supervised learning methods and our zero-shot method on ImageNet for $4\times$ SR. }
	\vspace{-5mm}
	\label{tb:supervised}
\end{table}

\vspace{-2mm}
\section{Conclusion}
\vspace{-1.5mm}
In this paper, we have proposed a novel zero-shot diffusion model-based IR method, called DeqIR.
Specifically, we model diffusion model-based IR generation as a deep equilibrium (DEQ) fixed point system.
Our IR method can conduct parallel sampling, instead of long sequential sampling in traditional diffusion models.
Based on the DEQ inversion, we are able to explore the relationship between the restoration and initialization. 
With the initialization optimization, the restoration performance can be improved and the generation direction can be guided with additional information. 
Extensive experiments demonstrate that our proposed DeqIR achieves better performance on different IR tasks.
Moreover, our DeqIR can be generalized to real-world applications.

\begin{flushleft}
	\textbf{Acknowledgments.}
	This work was partly supported by The Alexander von Humboldt Foundation.
\end{flushleft}
{
    \small
    \bibliographystyle{ieeenat_fullname}
    \bibliography{main}

\begin{thebibliography}{87}
\providecommand{\natexlab}[1]{#1}
\providecommand{\url}[1]{\texttt{#1}}
\expandafter\ifx\csname urlstyle\endcsname\relax
  \providecommand{\doi}[1]{doi: #1}\else
  \providecommand{\doi}{doi: \begingroup \urlstyle{rm}\Url}\fi

\bibitem[Amos and Kolter(2017)]{amos2017optnet}
Brandon Amos and J~Zico Kolter.
\newblock Optnet: Differentiable optimization as a layer in neural networks.
\newblock In \emph{ICML}, 2017.

\bibitem[Anderson(1965)]{anderson1965iterative}
Donald~G Anderson.
\newblock Iterative procedures for nonlinear integral equations.
\newblock \emph{Journal of the ACM}, 1965.

\bibitem[Bai et~al.(2019)Bai, Kolter, and Koltun]{bai2019deq}
Shaojie Bai, J~Zico Kolter, and Vladlen Koltun.
\newblock Deep equilibrium models.
\newblock In \emph{NeurIPS}, 2019.

\bibitem[Bai et~al.(2020)Bai, Koltun, and Kolter]{bai2020mdeq}
Shaojie Bai, Vladlen Koltun, and J~Zico Kolter.
\newblock Multiscale deep equilibrium models.
\newblock In \emph{NeurIPS}, 2020.

\bibitem[Bai et~al.(2022)Bai, Geng, Savani, and Kolter]{bai2022deep}
Shaojie Bai, Zhengyang Geng, Yash Savani, and J~Zico Kolter.
\newblock Deep equilibrium optical flow estimation.
\newblock In \emph{CVPR}, 2022.

\bibitem[Broyden(1965)]{broyden1965class}
Charles~G Broyden.
\newblock A class of methods for solving nonlinear simultaneous equations.
\newblock \emph{Mathematics of computation}, 1965.

\bibitem[Cao et~al.(2018)Cao, Guo, Wu, Shen, Huang, and Tan]{cao2018lccgan}
Jiezhang Cao, Yong Guo, Qingyao Wu, Chunhua Shen, Junzhou Huang, and Mingkui
  Tan.
\newblock Adversarial learning with local coordinate coding.
\newblock In \emph{ICML}, 2018.

\bibitem[Cao et~al.(2019)Cao, Mo, Zhang, Jia, Shen, and Tan]{cao2019mwgan}
Jiezhang Cao, Langyuan Mo, Yifan Zhang, Kui Jia, Chunhua Shen, and Mingkui Tan.
\newblock Multi-marginal wasserstein gan.
\newblock In \emph{NeurIPS}, 2019.

\bibitem[Cao et~al.(2020)Cao, Guo, Wu, Shen, Huang, and Tan]{cao2020improving}
Jiezhang Cao, Yong Guo, Qingyao Wu, Chunhua Shen, Junzhou Huang, and Mingkui
  Tan.
\newblock Improving generative adversarial networks with local coordinate
  coding.
\newblock \emph{TPAMI}, 2020.

\bibitem[Cao et~al.(2021)Cao, Li, Zhang, and Van~Gool]{cao2021vsrt}
Jiezhang Cao, Yawei Li, Kai Zhang, and Luc Van~Gool.
\newblock Video super-resolution transformer.
\newblock \emph{arXiv preprint arXiv:2106.06847}, 2021.

\bibitem[Cao et~al.(2022{\natexlab{a}})Cao, Liang, Zhang, Li, Zhang, Wang, and
  Gool]{cao2022datsr}
Jiezhang Cao, Jingyun Liang, Kai Zhang, Yawei Li, Yulun Zhang, Wenguan Wang,
  and Luc~Van Gool.
\newblock Reference-based image super-resolution with deformable attention
  transformer.
\newblock In \emph{ECCV}, 2022{\natexlab{a}}.

\bibitem[Cao et~al.(2022{\natexlab{b}})Cao, Liang, Zhang, Wang, Wang, Zhang,
  Tang, and Van~Gool]{cao2022davsr}
Jiezhang Cao, Jingyun Liang, Kai Zhang, Wenguan Wang, Qin Wang, Yulun Zhang,
  Hao Tang, and Luc Van~Gool.
\newblock Towards interpretable video super-resolution via alternating
  optimization.
\newblock In \emph{ECCV}, 2022{\natexlab{b}}.

\bibitem[Cao et~al.(2023)Cao, Wang, Xian, Li, Ni, Pi, Zhang, Zhang, Timofte,
  and Van~Gool]{cao2023ciaosr}
Jiezhang Cao, Qin Wang, Yongqin Xian, Yawei Li, Bingbing Ni, Zhiming Pi, Kai
  Zhang, Yulun Zhang, Radu Timofte, and Luc Van~Gool.
\newblock Ciaosr: Continuous implicit attention-in-attention network for
  arbitrary-scale image super-resolution.
\newblock In \emph{CVPR}, 2023.

\bibitem[Cavigelli et~al.(2017)Cavigelli, Hager, and Benini]{CAS-CNN}
Lukas Cavigelli, Pascal Hager, and Luca Benini.
\newblock Cas-cnn: A deep convolutional neural network for image compression
  artifact suppression.
\newblock In \emph{IJCNN}, 2017.

\bibitem[Chen et~al.(2021)Chen, Wang, Guo, Xu, Deng, Liu, Ma, Xu, Xu, and
  Gao]{IPT}
Hanting Chen, Yunhe Wang, Tianyu Guo, Chang Xu, Yiping Deng, Zhenhua Liu, Siwei
  Ma, Chunjing Xu, Chao Xu, and Wen Gao.
\newblock Pre-trained image processing transformer.
\newblock In \emph{CVPR}, 2021.

\bibitem[Chen et~al.(2022)Chen, Chu, Zhang, and Sun]{NAFNet}
Liangyu Chen, Xiaojie Chu, Xiangyu Zhang, and Jian Sun.
\newblock Simple baselines for image restoration.
\newblock In \emph{ECCV}, 2022.

\bibitem[Chen et~al.(2018)Chen, Rubanova, Bettencourt, and
  Duvenaud]{chen2018neural}
Ricky~TQ Chen, Yulia Rubanova, Jesse Bettencourt, and David~K Duvenaud.
\newblock Neural ordinary differential equations.
\newblock In \emph{NeurIPS}, 2018.

\bibitem[Chen et~al.(2023)Chen, Wang, Zhou, Qiao, and Dong]{chen2023activating}
Xiangyu Chen, Xintao Wang, Jiantao Zhou, Yu Qiao, and Chao Dong.
\newblock Activating more pixels in image super-resolution transformer.
\newblock In \emph{CVPR}, 2023.

\bibitem[Choi et~al.(2021)Choi, Kim, Jeong, Gwon, and Yoon]{choi2021ilvr}
Jooyoung Choi, Sungwon Kim, Yonghyun Jeong, Youngjune Gwon, and Sungroh Yoon.
\newblock Ilvr: Conditioning method for denoising diffusion probabilistic
  models.
\newblock In \emph{ICCV}, 2021.

\bibitem[Chung et~al.(2023)Chung, Kim, Mccann, Klasky, and Ye]{chung2023dps}
Hyungjin Chung, Jeongsol Kim, Michael~T Mccann, Marc~L Klasky, and Jong~Chul
  Ye.
\newblock Diffusion posterior sampling for general noisy inverse problems.
\newblock In \emph{ICLR}, 2023.

\bibitem[Dai et~al.(2019)Dai, Cai, Zhang, Xia, and Zhang]{SAN}
Tao Dai, Jianrui Cai, Yongbing Zhang, Shu-Tao Xia, and Lei Zhang.
\newblock Second-order attention network for single image super-resolution.
\newblock In \emph{CVPR}, 2019.

\bibitem[Deng et~al.(2009)Deng, Dong, Socher, Li, Li, and
  Fei-Fei]{deng2009imagenet}
Jia Deng, Wei Dong, Richard Socher, Li-Jia Li, Kai Li, and Li Fei-Fei.
\newblock Imagenet: A large-scale hierarchical image database.
\newblock In \emph{CVPR}, 2009.

\bibitem[Dhariwal and Nichol(2021)]{dhariwal2021diffusion}
Prafulla Dhariwal and Alexander Nichol.
\newblock Diffusion models beat gans on image synthesis.
\newblock In \emph{NeurIPS}, 2021.

\bibitem[Djolonga and Krause(2017)]{djolonga2017differentiable}
Josip Djolonga and Andreas Krause.
\newblock Differentiable learning of submodular models.
\newblock In \emph{NeurIPS}, 2017.

\bibitem[Dong et~al.(2015{\natexlab{a}})Dong, Deng, Loy, and
  Tang]{dong201arcnn}
Chao Dong, Yubin Deng, Chen~Change Loy, and Xiaoou Tang.
\newblock Compression artifacts reduction by a deep convolutional network.
\newblock In \emph{ICCV}, 2015{\natexlab{a}}.

\bibitem[Dong et~al.(2015{\natexlab{b}})Dong, Loy, He, and Tang]{dong2015srcnn}
Chao Dong, Chen~Change Loy, Kaiming He, and Xiaoou Tang.
\newblock Image super-resolution using deep convolutional networks.
\newblock \emph{TPAMI}, 2015{\natexlab{b}}.

\bibitem[Donti et~al.(2021)Donti, Rolnick, and Kolter]{donti2021dc3}
Priya~L Donti, David Rolnick, and J~Zico Kolter.
\newblock Dc3: A learning method for optimization with hard constraints.
\newblock In \emph{ICLR}, 2021.

\bibitem[Dupont et~al.(2019)Dupont, Doucet, and Teh]{dupont2019augmented}
Emilien Dupont, Arnaud Doucet, and Yee~Whye Teh.
\newblock Augmented neural odes.
\newblock In \emph{NeurIPS}, 2019.

\bibitem[Fu et~al.(2019)Fu, Zha, Wu, Ding, and Paisley]{restor13}
Xueyang Fu, Zheng-Jun Zha, Feng Wu, Xinghao Ding, and John Paisley.
\newblock Jpeg artifacts reduction via deep convolutional sparse coding.
\newblock In \emph{ICCV}, 2019.

\bibitem[Fu et~al.(2021)Fu, Wang, Cao, Ding, and Zha]{restor15}
Xueyang Fu, Menglu Wang, Xiangyong Cao, Xinghao Ding, and Zheng-Jun Zha.
\newblock A model-driven deep unfolding method for jpeg artifacts removal.
\newblock \emph{TNNLS}, 2021.

\bibitem[Fung et~al.(2021)Fung, Heaton, Li, McKenzie, Osher, and
  Yin]{fung2021fixed}
Samy~Wu Fung, Howard Heaton, Qiuwei Li, Daniel McKenzie, Stanley Osher, and
  Wotao Yin.
\newblock Fixed point networks: Implicit depth models with jacobian-free
  backprop.
\newblock \emph{arXiv preprint arXiv:2103.12803}, 2021.

\bibitem[Geng et~al.(2020)Geng, Guo, Chen, Li, Wei, and Lin]{geng2020attention}
Zhengyang Geng, Meng-Hao Guo, Hongxu Chen, Xia Li, Ke Wei, and Zhouchen Lin.
\newblock Is attention better than matrix decomposition?
\newblock In \emph{ICLR}, 2020.

\bibitem[Geng et~al.(2021)Geng, Zhang, Bai, Wang, and Lin]{geng2021training}
Zhengyang Geng, Xin-Yu Zhang, Shaojie Bai, Yisen Wang, and Zhouchen Lin.
\newblock On training implicit models.
\newblock In \emph{NeurIPS}, 2021.

\bibitem[Gu et~al.(2022)Gu, Goel, and R{\'e}]{gu2021efficiently}
Albert Gu, Karan Goel, and Christopher R{\'e}.
\newblock Efficiently modeling long sequences with structured state spaces.
\newblock In \emph{ICLR}, 2022.

\bibitem[Gu et~al.(2020)Gu, Chang, Zhu, Sojoudi, and El~Ghaoui]{gu2020implicit}
Fangda Gu, Heng Chang, Wenwu Zhu, Somayeh Sojoudi, and Laurent El~Ghaoui.
\newblock Implicit graph neural networks.
\newblock In \emph{NeurIPS}, 2020.

\bibitem[Gulrajani et~al.(2017)Gulrajani, Ahmed, Arjovsky, Dumoulin, and
  Courville]{WGAN-GP}
Ishaan Gulrajani, Faruk Ahmed, Martin Arjovsky, Vincent Dumoulin, and Aaron
  Courville.
\newblock Improved training of wasserstein gans.
\newblock In \emph{NeurIPS}, 2017.

\bibitem[Ho et~al.(2020)Ho, Jain, and Abbeel]{ho2020ddpm}
Jonathan Ho, Ajay Jain, and Pieter Abbeel.
\newblock Denoising diffusion probabilistic models.
\newblock In \emph{NeurIPS}, 2020.

\bibitem[Jia et~al.(2019)Jia, Liu, Feng, and Zhang]{restor12}
Xixi Jia, Sanyang Liu, Xiangchu Feng, and Lei Zhang.
\newblock Focnet: A fractional optimal control network for image denoising.
\newblock In \emph{CVPR}, 2019.

\bibitem[Karras et~al.(2018)Karras, Aila, Laine, and
  Lehtinen]{karras2018progressive}
Tero Karras, Timo Aila, Samuli Laine, and Jaakko Lehtinen.
\newblock Progressive growing of {GAN}s for improved quality, stability, and
  variation.
\newblock In \emph{ICLR}, 2018.

\bibitem[Kawar et~al.(2021)Kawar, Vaksman, and Elad]{kawar2021snips}
Bahjat Kawar, Gregory Vaksman, and Michael Elad.
\newblock Snips: Solving noisy inverse problems stochastically.
\newblock \emph{NeurIPS}, 2021.

\bibitem[Kawar et~al.(2022)Kawar, Elad, Ermon, and Song]{kawar2022ddrm}
Bahjat Kawar, Michael Elad, Stefano Ermon, and Jiaming Song.
\newblock Denoising diffusion restoration models.
\newblock In \emph{NeurIPS}, 2022.

\bibitem[Kim et~al.(2016)Kim, Lee, and Lee]{VDSR}
Jiwon Kim, Jung~Kwon Lee, and Kyoung~Mu Lee.
\newblock Accurate image super-resolution using very deep convolutional
  networks.
\newblock In \emph{CVPR}, 2016.

\bibitem[Kim et~al.(2019)Kim, Soh, Park, Ahn, Lee, Moon, and Cho]{restor14}
Yoonsik Kim, Jae~Woong Soh, Jaewoo Park, Byeongyong Ahn, Hyun-Seung Lee,
  Young-Su Moon, and Nam~Ik Cho.
\newblock A pseudo-blind convolutional neural network for the reduction of
  compression artifacts.
\newblock \emph{TCSVT}, 2019.

\bibitem[Kingma et~al.(2021)Kingma, Salimans, Poole, and
  Ho]{kingma2021variational}
Diederik Kingma, Tim Salimans, Ben Poole, and Jonathan Ho.
\newblock Variational diffusion models.
\newblock In \emph{NeurIPS}, 2021.

\bibitem[Ledig et~al.(2017)Ledig, Theis, Husz{\'a}r, Caballero, Cunningham,
  Acosta, Aitken, Tejani, Totz, Wang, et~al.]{ledig2017srgan}
Christian Ledig, Lucas Theis, Ferenc Husz{\'a}r, Jose Caballero, Andrew
  Cunningham, Alejandro Acosta, Andrew Aitken, Alykhan Tejani, Johannes Totz,
  Zehan Wang, et~al.
\newblock Photo-realistic single image super-resolution using a generative
  adversarial network.
\newblock In \emph{CVPR}, 2017.

\bibitem[Li et~al.(2022{\natexlab{a}})Li, Wang, and Lin]{li2022cerdeq}
Mingjie Li, Yisen Wang, and Zhouchen Lin.
\newblock Cerdeq: Certifiable deep equilibrium model.
\newblock In \emph{ICML}, 2022{\natexlab{a}}.

\bibitem[Li et~al.(2022{\natexlab{b}})Li, Lin, Zhou, Qi, Wang, and Jia]{MAT}
Wenbo Li, Zhe Lin, Kun Zhou, Lu Qi, Yi Wang, and Jiaya Jia.
\newblock Mat: Mask-aware transformer for large hole image inpainting.
\newblock In \emph{CVPR}, 2022{\natexlab{b}}.

\bibitem[Liang et~al.(2021)Liang, Cao, Sun, Zhang, Van~Gool, and
  Timofte]{SWINIR}
Jingyun Liang, Jiezhang Cao, Guolei Sun, Kai Zhang, Luc Van~Gool, and Radu
  Timofte.
\newblock Swinir: Image restoration using swin transformer.
\newblock In \emph{ICCVW}, 2021.

\bibitem[Liang et~al.(2024)Liang, Cao, Fan, Zhang, Ranjan, Li, Timofte, and
  Van~Gool]{liang2022vrt}
Jingyun Liang, Jiezhang Cao, Yuchen Fan, Kai Zhang, Rakesh Ranjan, Yawei Li,
  Radu Timofte, and Luc Van~Gool.
\newblock Vrt: A video restoration transformer.
\newblock \emph{TIP}, 2024.

\bibitem[Lin et~al.(2023)Lin, He, Chen, Lyu, Fei, Dai, Ouyang, Qiao, and
  Dong]{lin2023diffbir}
Xinqi Lin, Jingwen He, Ziyan Chen, Zhaoyang Lyu, Ben Fei, Bo Dai, Wanli Ouyang,
  Yu Qiao, and Chao Dong.
\newblock Diffbir: Towards blind image restoration with generative diffusion
  prior.
\newblock \emph{arXiv preprint arXiv:2308.15070}, 2023.

\bibitem[Lu et~al.(2021)Lu, Chen, Li, Wang, and Zhu]{lu2021implicit}
Cheng Lu, Jianfei Chen, Chongxuan Li, Qiuhao Wang, and Jun Zhu.
\newblock Implicit normalizing flows.
\newblock In \emph{ICLR}, 2021.

\bibitem[Lu et~al.(2022)Lu, Zhou, Bao, Chen, Li, and Zhu]{lu2022dpm}
Cheng Lu, Yuhao Zhou, Fan Bao, Jianfei Chen, Chongxuan Li, and Jun Zhu.
\newblock Dpm-solver: A fast ode solver for diffusion probabilistic model
  sampling in around 10 steps.
\newblock In \emph{NeurIPS}, 2022.

\bibitem[Lugmayr et~al.(2022)Lugmayr, Danelljan, Romero, Yu, Timofte, and
  Van~Gool]{lugmayr2022repaint}
Andreas Lugmayr, Martin Danelljan, Andres Romero, Fisher Yu, Radu Timofte, and
  Luc Van~Gool.
\newblock Repaint: Inpainting using denoising diffusion probabilistic models.
\newblock In \emph{CVPR}, 2022.

\bibitem[Menon et~al.(2020)Menon, Damian, Hu, Ravi, and Rudin]{menon2020pulse}
Sachit Menon, Alexandru Damian, Shijia Hu, Nikhil Ravi, and Cynthia Rudin.
\newblock Pulse: Self-supervised photo upsampling via latent space exploration
  of generative models.
\newblock In \emph{CVPR}, 2020.

\bibitem[Pan et~al.(2021)Pan, Zhan, Dai, Lin, Loy, and Luo]{pan2021dgp}
Xingang Pan, Xiaohang Zhan, Bo Dai, Dahua Lin, Chen~Change Loy, and Ping Luo.
\newblock Exploiting deep generative prior for versatile image restoration and
  manipulation.
\newblock \emph{TPAMI}, 2021.

\bibitem[Pathak et~al.(2016)Pathak, Krahenbuhl, Donahue, Darrell, and
  Efros]{inpainting-GAN}
Deepak Pathak, Philipp Krahenbuhl, Jeff Donahue, Trevor Darrell, and Alexei~A
  Efros.
\newblock Context encoders: Feature learning by inpainting.
\newblock In \emph{CVPR}, 2016.

\bibitem[Pokle et~al.(2022)Pokle, Geng, and Kolter]{pokle2022deqddim}
Ashwini Pokle, Zhengyang Geng, and J~Zico Kolter.
\newblock Deep equilibrium approaches to diffusion models.
\newblock In \emph{NeurIPS}, 2022.

\bibitem[Rombach et~al.(2022)Rombach, Blattmann, Lorenz, Esser, and
  Ommer]{rombach2022ldm}
Robin Rombach, Andreas Blattmann, Dominik Lorenz, Patrick Esser, and Bj{\"o}rn
  Ommer.
\newblock High-resolution image synthesis with latent diffusion models.
\newblock In \emph{CVPR}, 2022.

\bibitem[Saharia et~al.(2022{\natexlab{a}})Saharia, Chan, Chang, Lee, Ho,
  Salimans, Fleet, and Norouzi]{saharia2022palette}
Chitwan Saharia, William Chan, Huiwen Chang, Chris Lee, Jonathan Ho, Tim
  Salimans, David Fleet, and Mohammad Norouzi.
\newblock Palette: Image-to-image diffusion models.
\newblock In \emph{ACM SIGGRAPH}, 2022{\natexlab{a}}.

\bibitem[Saharia et~al.(2022{\natexlab{b}})Saharia, Ho, Chan, Salimans, Fleet,
  and Norouzi]{saharia2022sr3}
Chitwan Saharia, Jonathan Ho, William Chan, Tim Salimans, David~J Fleet, and
  Mohammad Norouzi.
\newblock Image super-resolution via iterative refinement.
\newblock \emph{TPAMI}, 2022{\natexlab{b}}.

\bibitem[Song et~al.(2021{\natexlab{a}})Song, Meng, and Ermon]{song2020ddim}
Jiaming Song, Chenlin Meng, and Stefano Ermon.
\newblock Denoising diffusion implicit models.
\newblock In \emph{ICLR}, 2021{\natexlab{a}}.

\bibitem[Song et~al.(2021{\natexlab{b}})Song, Sohl-Dickstein, Kingma, Kumar,
  Ermon, and Poole]{song2020score}
Yang Song, Jascha Sohl-Dickstein, Diederik~P Kingma, Abhishek Kumar, Stefano
  Ermon, and Ben Poole.
\newblock Score-based generative modeling through stochastic differential
  equations.
\newblock In \emph{ICLR}, 2021{\natexlab{b}}.

\bibitem[Wang et~al.(2023{\natexlab{a}})Wang, Yue, Zhou, Chan, and
  Loy]{wang2023stablesr}
Jianyi Wang, Zongsheng Yue, Shangchen Zhou, Kelvin~CK Chan, and Chen~Change
  Loy.
\newblock Exploiting diffusion prior for real-world image super-resolution.
\newblock In \emph{arXiv preprint arXiv:2305.07015}, 2023{\natexlab{a}}.

\bibitem[Wang et~al.(2023{\natexlab{b}})Wang, Teng, and Wang]{wang2023deep}
Shuai Wang, Yao Teng, and Limin Wang.
\newblock Deep equilibrium object detection.
\newblock In \emph{ICCV}, 2023{\natexlab{b}}.

\bibitem[Wang et~al.(2020)Wang, Zhang, and Sun]{wang2020implicit}
Tiancai Wang, Xiangyu Zhang, and Jian Sun.
\newblock Implicit feature pyramid network for object detection.
\newblock \emph{arXiv preprint arXiv:2012.13563}, 2020.

\bibitem[Wang et~al.(2018)Wang, Yu, Wu, Gu, Liu, Dong, Qiao, and
  Change~Loy]{ESRGAN}
Xintao Wang, Ke Yu, Shixiang Wu, Jinjin Gu, Yihao Liu, Chao Dong, Yu Qiao, and
  Chen Change~Loy.
\newblock Esrgan: Enhanced super-resolution generative adversarial networks.
\newblock In \emph{ECCVW}, 2018.

\bibitem[Wang et~al.(2021)Wang, Xie, Dong, and Shan]{Real-ESRGAN}
Xintao Wang, Liangbin Xie, Chao Dong, and Ying Shan.
\newblock Real-esrgan: Training real-world blind super-resolution with pure
  synthetic data.
\newblock In \emph{ICCVW}, 2021.

\bibitem[Wang et~al.(2023{\natexlab{c}})Wang, Yu, and Zhang]{wang2022ddnm}
Yinhuai Wang, Jiwen Yu, and Jian Zhang.
\newblock Zero-shot image restoration using denoising diffusion null-space
  model.
\newblock In \emph{ICLR}, 2023{\natexlab{c}}.

\bibitem[Wei and Kolter(2021)]{wei2021certified}
Colin Wei and J~Zico Kolter.
\newblock Certified robustness for deep equilibrium models via interval bound
  propagation.
\newblock In \emph{ICLR}, 2021.

\bibitem[Whang et~al.(2022)Whang, Delbracio, Talebi, Saharia, Dimakis, and
  Milanfar]{whang2022deblurring}
Jay Whang, Mauricio Delbracio, Hossein Talebi, Chitwan Saharia, Alexandros~G
  Dimakis, and Peyman Milanfar.
\newblock Deblurring via stochastic refinement.
\newblock In \emph{CVPR}, 2022.

\bibitem[Xia et~al.(2022)Xia, Hang, Tian, Yang, Liao, and Zhou]{ENLCA}
Bin Xia, Yucheng Hang, Yapeng Tian, Wenming Yang, Qingmin Liao, and Jie Zhou.
\newblock Efficient non-local contrastive attention for image super-resolution.
\newblock In \emph{AAAI}, 2022.

\bibitem[Xia et~al.(2023{\natexlab{a}})Xia, Zhang, Wang, Wang, Wu, Tian, Yang,
  and Van~Gool]{xia2023diffir}
Bin Xia, Yulun Zhang, Shiyin Wang, Yitong Wang, Xinglong Wu, Yapeng Tian,
  Wenming Yang, and Luc Van~Gool.
\newblock Diffir: Efficient diffusion model for image restoration.
\newblock In \emph{ICCV}, 2023{\natexlab{a}}.

\bibitem[Xia et~al.(2023{\natexlab{b}})Xia, Zhang, Wang, Tian, Yang, Timofte,
  and Van~Gool]{KDSR}
Bin Xia, Yulun Zhang, Yitong Wang, Yapeng Tian, Wenming Yang, Radu Timofte, and
  Luc Van~Gool.
\newblock Knowledge distillation based degradation estimation for blind
  super-resolution.
\newblock In \emph{ICLR}, 2023{\natexlab{b}}.

\bibitem[Xie et~al.(2019)Xie, Liu, Li, Cheng, Zuo, Liu, Wen, and
  Ding]{inpainting3}
Chaohao Xie, Shaohui Liu, Chao Li, Ming-Ming Cheng, Wangmeng Zuo, Xiao Liu,
  Shilei Wen, and Errui Ding.
\newblock Image inpainting with learnable bidirectional attention maps.
\newblock In \emph{ICCV}, 2019.

\bibitem[Yang et~al.(2022)Yang, Pang, and Liu]{yang2022closer}
Zonghan Yang, Tianyu Pang, and Yang Liu.
\newblock A closer look at the adversarial robustness of deep equilibrium
  models.
\newblock In \emph{NeurIPS}, 2022.

\bibitem[Yi et~al.(2020)Yi, Tang, Azizi, Jang, and Xu]{inpainting2}
Zili Yi, Qiang Tang, Shekoofeh Azizi, Daesik Jang, and Zhan Xu.
\newblock Contextual residual aggregation for ultra high-resolution image
  inpainting.
\newblock In \emph{CVPR}, 2020.

\bibitem[Yu et~al.(2018)Yu, Lin, Yang, Shen, Lu, and Huang]{inpainting1}
Jiahui Yu, Zhe Lin, Jimei Yang, Xiaohui Shen, Xin Lu, and Thomas~S Huang.
\newblock Generative image inpainting with contextual attention.
\newblock In \emph{CVPR}, 2018.

\bibitem[Yu et~al.(2019)Yu, Lin, Yang, Shen, Lu, and Huang]{deepfillv2}
Jiahui Yu, Zhe Lin, Jimei Yang, Xiaohui Shen, Xin Lu, and Thomas~S Huang.
\newblock Free-form image inpainting with gated convolution.
\newblock In \emph{ICCV}, 2019.

\bibitem[Yue and Loy(2022)]{yue2022difface}
Zongsheng Yue and Chen~Change Loy.
\newblock Difface: Blind face restoration with diffused error contraction.
\newblock \emph{arXiv preprint arXiv:2212.06512}, 2022.

\bibitem[Zamir et~al.(2022)Zamir, Arora, Khan, Hayat, Khan, and
  Yang]{restormer}
Syed~Waqas Zamir, Aditya Arora, Salman Khan, Munawar Hayat, Fahad~Shahbaz Khan,
  and Ming-Hsuan Yang.
\newblock Restormer: Efficient transformer for high-resolution image
  restoration.
\newblock In \emph{CVPR}, 2022.

\bibitem[Zeng et~al.(2022)Zeng, Fu, Chao, and Guo]{AOTGAN}
Yanhong Zeng, Jianlong Fu, Hongyang Chao, and Baining Guo.
\newblock Aggregated contextual transformations for high-resolution image
  inpainting.
\newblock \emph{TVCG}, 2022.

\bibitem[Zhang et~al.(2021{\natexlab{a}})Zhang, Li, Zuo, Zhang, Van~Gool, and
  Timofte]{plug-denoiser}
Kai Zhang, Yawei Li, Wangmeng Zuo, Lei Zhang, Luc Van~Gool, and Radu Timofte.
\newblock Plug-and-play image restoration with deep denoiser prior.
\newblock \emph{TPAMI}, 2021{\natexlab{a}}.

\bibitem[Zhang et~al.(2021{\natexlab{b}})Zhang, Liang, Van~Gool, and
  Timofte]{zhang2021bsrgan}
Kai Zhang, Jingyun Liang, Luc Van~Gool, and Radu Timofte.
\newblock Designing a practical degradation model for deep blind image
  super-resolution.
\newblock In \emph{ICCV}, 2021{\natexlab{b}}.

\bibitem[Zhang et~al.(2016)Zhang, Isola, and Efros]{zhang2016colorful}
Richard Zhang, Phillip Isola, and Alexei~A Efros.
\newblock Colorful image colorization.
\newblock In \emph{ECCV}, 2016.

\bibitem[Zhang et~al.(2018)Zhang, Li, Li, Wang, Zhong, and Fu]{RCAN}
Yulun Zhang, Kunpeng Li, Kai Li, Lichen Wang, Bineng Zhong, and Yun Fu.
\newblock Image super-resolution using very deep residual channel attention
  networks.
\newblock In \emph{ECCV}, 2018.

\bibitem[Zhou et~al.(2022)Zhou, Chan, Li, and Loy]{zhou2022codeformer}
Shangchen Zhou, Kelvin~C.K. Chan, Chongyi Li, and Chen~Change Loy.
\newblock Towards robust blind face restoration with codebook lookup
  transformer.
\newblock In \emph{NeurIPS}, 2022.

\bibitem[Zhu et~al.(2023)Zhu, Zhang, Liang, Cao, Wen, Timofte, and
  Van~Gool]{zhu2023diffpir}
Yuanzhi Zhu, Kai Zhang, Jingyun Liang, Jiezhang Cao, Bihan Wen, Radu Timofte,
  and Luc Van~Gool.
\newblock Denoising diffusion models for plug-and-play image restoration.
\newblock In \emph{CVPRW}, 2023.

\end{thebibliography}
}

\end{document}